\documentclass{article}

\PassOptionsToPackage{numbers, compress}{natbib}



\usepackage[final]{nips_2018}


\usepackage[utf8]{inputenc} 
\usepackage[T1]{fontenc}    
\usepackage{hyperref}       
\usepackage{url}            
\usepackage{booktabs}       
\usepackage{amsfonts}       
\usepackage{nicefrac}       
\usepackage{microtype}      



\usepackage{multicol}
\usepackage[utf8]{inputenc}
\usepackage{times}
\usepackage{graphicx} 
\usepackage{subfigure} 
\usepackage{amsmath}

\usepackage{algorithm}
\usepackage{algorithmic}

\usepackage{amssymb,amsmath,amsfonts}
\usepackage{mathrsfs}
\usepackage{color}
\newcommand{\norm}[1]{\left\lVert#1\right\rVert}
\newcommand{\expect}[1]{\mathbb{E}\left[{#1}\right]}
\newcommand{\prob}[1]{\mathbb{P}\left[{#1}\right]}
\newcommand{\given}{\; \big\vert \;} 
\newcommand{\bydef}{:=}

\newcommand{\inner}[2]{\langle #1, #2 \rangle}

\usepackage{subfigure}
\usepackage{mathnotations2}

\graphicspath{{./Figures/}}


\newtheorem{mytheorem}{Theorem}

\newtheorem{mylemma}{Lemma}
\newtheorem{mycorollary}{Corollary}

\newtheorem{mydefinition}{Definition}

\newcommand{\BlackBox}{\rule{1.5ex}{1.5ex}}
\newenvironment{proof}{\par\noindent{\bf Proof\ }}{\hfill\BlackBox\\[2mm]}
\newcommand{\beq}{\begin{equation}}
\newcommand{\eeq}{\end{equation}}
\newcommand{\beqn}{\begin{equation*}}
\newcommand{\eeqn}{\end{equation*}}
\newcommand{\beqa}{\begin{eqnarray}}
\newcommand{\eeqa}{\end{eqnarray}}
\newcommand{\beqan}{\begin{eqnarray*}}
\newcommand{\eeqan}{\end{eqnarray*}}

\newcommand{\argmax}{\mathop{\mathrm{argmax}}}

\newcommand{\argmin}{\mathop{\mathrm{argmin}}}



\title{On Batch Bayesian Optimization}

\author{
  Sayak Ray Chowdhury\\
  Department of ECE\\
  Indian Institute of Science\\
  Bangalore, India 560012\\
  \texttt{sayak@iisc.ac.in} 
  \And
  Aditya Gopalan \\
  Department of ECE\\
  Indian Institute of Science\\
  Bangalore, India 560012 \\
  \texttt{aditya@iisc.ac.in} 
}

\begin{document}

\maketitle

\begin{abstract}
We present two algorithms for Bayesian optimization in the batch feedback setting, based on Gaussian process upper confidence bound and Thompson sampling approaches, along with frequentist regret guarantees and numerical results.
\end{abstract}
 
\vspace*{-3mm}
\section{Introduction}
\label{sec:Introduction}
\vspace*{-3mm}
Black-box optimization of an unknown function is an important problem in several real world domains such as hyper-parameter tuning of complex machine learning models, experimental design etc, and in recent years, the Bayesian optimization framework has gained a lot of traction towards achieving this goal.
Bayesian optimization (BO) methods start with a prior distribution, generally Gaussian processes (GPs), over a function class, and use function evaluations to compute the posterior distribution. Popular strategies in this vein include
expected improvement (GP-EI) \cite{movckus1975bayesian}, probability of improvement (GP-PI) \cite{wang2016optimization}, upper confidence bounds (GP-UCB) \cite{srinivas2012information}, Thompson sampling (GP-TS) \cite{pmlr-v70-chowdhury17a}, predictive-entropy search \cite{hernandez2015predictive}, etc. In some cases, it is possible and also desirable to evaluate the function in batches, e.g.,  parallelizing an expensive computer simulation over multiple cores. In this case, we can gather more information within a same time window, but future decisions need to be taken without the benefit of the evaluations in progress. A flurry of parallel (or, equivalently batch) Bayesian optimization strategies have been developed recently to address this problem \cite{desautels2014parallelizing,kandasamy2018parallelised,contal2013parallel,kathuria2016batched,gonzalez2016batch}. In this work, we explore further the potential of batch BO strategies for black-box optimization, assuming that the unknown function is in the Reproducing Kernel Hilbert Space (RKHS) induced by a symmetric positive semi-definite kernel.

\vspace*{-1mm}
\textbf{Contributions.} 
We design a new algorithm – Improved Gaussian
Process-Batch Upper Confidence Bound (IGP-BUCB) – for batch Bayesian optimization. It
is a variant of the GP-BUCB algorithm of \citet{desautels2014parallelizing},
but with a significantly reduced confidence interval resulting in an order-wise improvement in its regret bound. We also develop a nonparametric version of Thompson sampling, namely Gaussian Process-Batch Thompson Sampling (GP-BTS), and
prove the first frequentist guarantee of TS in the setting of Batch Bayesian optimization. To put this in perspective, GP-BTS can be seen as a variant of the AsyTS algorithm of \citet{kandasamy2018parallelised}. But the setting under which it is analyzed in this work is agnostic i.e., under a fixed but unknown function, whereas \citet{kandasamy2018parallelised} consider the pure Bayesian setup. Finally, we confirm empirically the efficiency of IGP-BUCB and GP-BTS on several synthetic and real-world datasets.
\vspace*{-3mm}
\section{Problem Statement}
\label{sec:Problem-Statement}
\vspace*{-3mm}
We consider the problem of sequentially maximizing a fixed but unknown reward function $f:D\ra \Real$ over a set of decisions (equivalently arms or actions) $D \subset \Real^d$. An algorithm for this problem chooses, at each round $t$, an action $x_t \in D$, and observes a noisy reward $y_t=f(x_t) + \epsilon_t$. We assume that the noise sequence $\lbrace\epsilon_t\rbrace_{t \ge 1}$ is conditionally $R$-sub-Gaussian for a fixed constant $R \ge 0$, i.e., for all $t \ge 0$ and $\lambda \in \Real$, $\expect{e^{\lambda \epsilon_t} \given \cF_{t-1}} \le \exp\left(\lambda^2R^2/2\right)$,
where $\cF_{t-1}$ is the $\sigma$-algebra generated by the random variables $\lbrace x_s, \epsilon_s\rbrace_{s = 1}^{t-1}$ and $x_t$. The decision $x_t$ is chosen causally depending upon the arms played and rewards available till round $t-1$. Specifically, for each decision round $t$, let $\cS(t) \le t-1$ represent the index of the most recent round for which rewards are available, so that $x_t$ can be chosen using only rewards obtained till round $\cS(t)$, along with actions (naturally) known to the algorithm until round $t-1$. We assume that $t-\cS(t) \le M$ for a known constant $M \ge 1$, i.e., rewards are available as batches of variable lengths upto $M$. For example: (a) if $\cS(t)=M\lfloor (t-1)/M \rfloor$, then rewards are available as batches of length $M$ and it is denoted as the \textit{simple batch} setting and (b) if $\cS(t)=\max\lbrace t-M,0 \rbrace$, then the rewards are delayed by $M$ time periods and it is denoted as the \textit{simple delay} setting. An important special case is when $M=1$ or equivalently, $\cS(t)=t-1$. Then all the rewards till round $t-1$ are available, and this represents the standard \textit{strictly sequential} setting.
%

\vspace*{-1mm}
\textbf{Regret.} A natural goal of a sequential algorithm is to maximize its cumulative reward $\sum_{t=1}^{T} f(x_t)$ over a time horizon $T$ or equivalently minimize its cumulative {\em regret} $R_T = \sum_{t=1}^T \left(f(x^\star)-f(x_t)\right)$, where $x^\star \in \argmax_{x\in D}f(x)$ is a maximum point of $f$ (assuming the maximum is attained; not necessarily unique). A sublinear growth of $R_T$ in $T$ signifies that the time-average regret $R_T/T \ra 0$ as $T\ra \infty$. 

\vspace*{-1mm}
\textbf{Regularity assumptions.} Attaining sub-linear regret is impossible in general for arbitrary reward functions $f$, and thus some regularity assumptions are in order. In what follows, we assume that $f$ has small norm in the reproducing Kernel Hilbert space (RKHS), denoted as $\cH_k(D)$, of real valued functions on $D$, with positive semi-definite kernel function $k: D \times D \to \mathbb{R}$. %
We assume a known bound on the RKHS norm of $f$, i.e., $\norm{f}_k \le B$. Moreover, we assume bounded variance by restricting $k(x,x) \le 1$, for all $x \in D$. Some common kernels, such as
the Squared Exponential (SE) kernel and the Matérn kernel, satisfy this property.

\vspace*{-3mm}
\section{Algorithms}
\label{sec:Algorithms}
\vspace*{-3mm}
{\bf Representing uncertainty of $f$ via Gaussian processes.} 
We model $f$ as a sample from a
Gaussian process prior $GP_D(0,k)$, and assume that the noise variables $\epsilon_t \sim \cN(0,\lambda)$ are i.i.d. Gaussian. %
By standard properties of GPs \cite{rasmussen2006gaussian}, conditioned on the history of observations $\cH_t \bydef \lbrace (x_s,y_s)\rbrace_{s=1}^{t}$, the posterior over $f$ is also a Gaussian process, $GP_D(\mu_t,k_t)$, with mean function $\mu_t(x) \bydef k_t(x)^T(K_t + \lambda I)^{-1}Y_t$ and kernel function $k_t(x,x') \bydef k(x,x') - k_t(x)^T(K_t + \lambda I)^{-1} k_t(x')$. Here $Y_t \bydef [y_1,\ldots,y_t]^T$ denotes the vector of rewards observed at the set $A_t \bydef \lbrace x_1,\ldots,x_t \rbrace$, $k_t(x) \bydef  [k(x_1,x),\ldots,k(x_t,x)]^T$ denotes the vector of kernel evaluations between $x$ and elements of the set $A_t$ and $K_t \bydef [k(u,v)]_{u,v \in A_t}$ denotes the kernel matrix computed at $A_t$.

\vspace*{-1mm}
\textbf{Representing the posterior GP with delayed feedback.} In the batch (equivalently delayed) feedback setup, the only available rewards at the start of round $t$ are $y_1,\ldots,y_{\cS(t)}$; however, all the previous decisions $ x_1,\ldots,x_{t-1}$ are available. This suggests `hallucinating' the  missing rewards $y_{S(t)+1},\ldots,y_{t-1}$, an idea first proposed by  \citet{desautels2014parallelizing}, using the most recently updated
posterior mean $\mu_{\cS(t)}$, i.e., setting $y_s=\mu_{\cS(t)}(x_s)$ for all $s = S(t)+1, S(t)+2,\ldots,t-1 $. By doing this, observe via,  say, the iterative GP update equations (\ref{eqn:mean-online}) and (\ref{eqn:cov-online}) \cite{chowdhury2017kernelized}, that the
mean of the posterior including the hallucinated observations remains precisely $\mu_{\cS(t)}$, 
but the posterior covariance decreases to $k_{t-1}$.
\vspace*{-3mm}
\footnotesize
\beqa
\mu_s(x) &=& \mu_{s-1}(x) + \dfrac{k_{s-1}(x_s,x)}{\lambda+\sigma^2_{s-1}(x_s)}\left(y_s - \mu_{s-1}(x_s)\right),\label{eqn:mean-online}\\
k_s(x,x') &=& k_{s-1}(x,x') - \dfrac{k_{s-1}(x_s,x)k_{s-1}(x_s,x')}{\lambda+ \sigma^2_{s-1}(x_s)}.\label{eqn:cov-online}
\eeqa
\normalsize
Therefore, a natural approach towards batch Bayesian optimization is to use a decision rule that sequentially chooses actions using all the information that is available so far, i.e., a rule that uses  the most recently updated posterior mean $\mu_{\cS(t)}$
and posterior kernel $k_{t-1}$ to choose action $x_t$ at round $t$.

\vspace*{-1mm}
\textbf{Improved GP-Batch UCB (IGP-BUCB) algorithm.} IGP-BUCB (Algorithm \ref{algo:ucb}), at each round $t$, 
chooses the action $x_t=\argmax_{x\in D}\mu_{\cS(t)}(x)+\beta_t\sigma_{t-1}(x)$, where $\sigma_{t-1}(x) \bydef k_{t-1}(x,x)$ and $\beta_t\bydef \sqrt{\xi_M}\Big(B+ \dfrac{R}{\sqrt{\lambda}}\sqrt{2\left(\gamma_{\cS(t)}+\ln(1/\delta)\right)}\Big)$. Here $0 < \delta \le 1$ is a free parameter. $\gamma_t \bydef \max_{A \subset D : \abs{A}=t} I(f_A;Y_A)$
denotes the \textit{Maximum Information Gain} about any $f \sim GP_D(0,k)$ from $t$ noisy observations $Y_A$, which are obtained by passing $f_A := [f(x)]_{x\in A}$ through a channel $\prob{Y_A | f_A}=\cN(0,\lambda I)$. The key quantity $\xi_M$ bounds the information we gain about $f$ from the hallucinated observations (there are at most $M-1$ of them at every round) conditioned on the actual observations, in the sense that
$I\big(f(x);Y_{\cS(t)+1:t-1}\given Y_{1:\cS(t)}\big) \le 1/2 \ln(\xi_M)$ for all $x \in D$, where $Y_{1:\cS(t)} \bydef [y_1,\ldots,y_{\cS(t)}]^T$ and $Y_{\cS(t)+1:t-1} \bydef [y_{\cS(t)+1},\ldots,y_{t-1}]^T$ denote the vectors of actual and hallucinated observations, respectively.
This rule inherently
trades off exploration (picking points with high uncertainty
$\sigma_{t-1}(x)$) with exploitation (picking points with high reward $\mu_{\cS(t)}(x)$), where  
$\beta_t$ serves twin purposes: (a) it balances exploration and exploitation, (b) it compensates (via $\xi_M$) for the bias created by the hallucinated data $Y_{\cS(t)+1:t-1}$, in the attempt to aggressively shrink the confidence interval and reduce exploration. \\
\textbf{Note:} While \citet{desautels2014parallelizing} propose the GP-BUCB algorithm, which also uses the same template as IGP-BUCB, we are able to reduce the width of the confidence interval and provably improve upon regret (Section \ref{sec:main-results}).

\begin{minipage}[t]{7cm}
\vspace*{-5mm} 
  \vspace{0pt}  
  \begin{algorithm}[H]
   \renewcommand\thealgorithm{1}
   \caption{IGP-BUCB}\label{algo:ucb}
   \begin{algorithmic}
   \STATE \textbf{Input:} Kernel $k$, feedback mapping $\cS$.
   \FOR{$t = 1, 2, 3 \ldots$}
   \STATE Choose $x_t = \argmax\limits_{x\in D} \mu_{\cS(t)}(x) + \beta_t\sigma_{t-1}(x)$.
   \STATE Update $\sigma_t$ using (\ref{eqn:cov-online}).
   \IF{$\cS(t) < \cS(t+1)$}
   \FOR{$s = \cS(t)+1,\ldots,\cS(t+1)$}
   \STATE Observe $y_s = f(x_s)+\epsilon_s$.
   \STATE Update $\mu_s$ using (\ref{eqn:mean-online}).
   \ENDFOR
   \ENDIF
   \ENDFOR
   \end{algorithmic}
   \addtocounter{algorithm}{-1}
   \end{algorithm}
\end{minipage}%
\begin{minipage}[t]{7cm}
\vspace*{-5mm} 
  \vspace{0pt}
  \begin{algorithm}[H]
   \renewcommand\thealgorithm{2}
    \caption{GP-BTS}\label{algo:ts} \begin{algorithmic}
    \STATE \textbf{Input:} Kernel $k$, feedback mapping $\cS$.
    \FOR{$t = 1, 2, 3 \ldots$}
    \STATE Sample $f_t \sim GP_{D_t}(\mu_{\cS(t)},v_t^2k_{t-1})$.
    \STATE Choose $x_t = \argmax_{x\in D_t} f_t(x)$.
    \STATE Update $k_t$ using (\ref{eqn:cov-online}).
     \IF{$\cS(t) < \cS(t+1)$}
     \FOR{$s = \cS(t)+1,\ldots,\cS(t+1)$}
     \STATE Observe $y_s = f(x_s)+\epsilon_s$.
     \STATE Update $\mu_s$ using (\ref{eqn:mean-online}).
     \ENDFOR
     \ENDIF
    \ENDFOR
    \end{algorithmic}
    \addtocounter{algorithm}{-2}
    \end{algorithm}
\end{minipage}

\textbf{GP-Batch Thompson Sampling (GP-BTS) algorithm.} Thompson sampling is a randomized strategy, and at every round chooses the action according to the posterior probability that it is optimal. At every round $t$, GP-BTS (Algorithm \ref{algo:ts}) (a) samples a random function $f_t$ from the posterior Gaussian process $GP_{D_t}\left(\mu_{\cS(t)},v_t^2k_{t-1}\right)$, where $D_t$ is a suitable discretization (See Appendix \ref{appendix:TS} for details) of $D$, $v_t \bydef \sqrt{\xi_M}\Big(B + \dfrac{R}{\sqrt{\lambda}}\sqrt{2\left(\gamma_{\cS(t)}+\ln(2/\delta)\right)}\Big)$,  and (b) chooses the action $x_t = \argmax_{x\in D_t} f_t(x)$. Here again, $v_t$ plays a role similar to that of $\beta_t$ as in IGP-BUCB, i.e., promoting exploration and compensating for the bias of hallucination.

\textit{\textbf{Remark.}}
  One particular choice for $\xi_M$ is $e^{2\gamma_{M-1}}$ \cite{desautels2014parallelizing}, and $\gamma_t$ (or an upper bound on it) can be computed given the kernel \cite{srinivas2009gaussian}; e.g., for the Squared Exponential (SE) kernel, $\gamma_t = O\big((\ln t)^{d}\big)$ and for the Mat$\acute{e}$rn
kernel with smoothness parameter $\nu$, $\gamma_t = O\big( t^{\frac{d(d+1)}{2\nu+d(d+1)}}\ln t\big)$.

\vspace*{-3mm}
\section{Regret Bounds for IGP-BUCB and GP-BTS}
\label{sec:main-results}
\vspace*{-3mm}
Though our algorithms rely on GP priors, the setting under which they are
analyzed is {\em agnostic}, i.e., under a {\em fixed} (non-random) but unknown reward function. This is arguably more challenging \cite{srinivas2009gaussian} than traditional Bayesian regret (expected regret under a random reward function from the known GP prior) analysis. 
\vspace*{-3mm} 
\begin{mytheorem}[Regret bound for IGP-BUCB]
\label{thm:regret-bound-UCB-approx}
Let $D \subset \Real^d$, $f$ be a member of the RKHS $\cH_k(D)$, with $\norm{f}_k \le B$ and the noise sequence $\lbrace \epsilon_t \rbrace_{t \ge 1}$ be conditionally $R$-sub-Gaussian. Then, for any $0 < \delta \le 1$, IGP-BUCB enjoys, with probability at least $1-\delta$, the regret bound
$R_T = O\Big(B\sqrt{\xi_M T\gamma_T}+\sqrt{\xi_M T\gamma_T\big(\gamma_T+\ln(1/\delta)\big)}\Big)$.
\end{mytheorem}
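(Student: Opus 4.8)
The plan is to follow the standard optimism-based UCB regret decomposition, but with a confidence interval tailored to the batch setting. The core object is a \emph{valid confidence bound}: I would first show that, with probability at least $1-\delta$, simultaneously for all $t \ge 1$ and all $x \in D$,
\beq
\abs{f(x) - \mu_{\cS(t)}(x)} \le \beta_t\,\sigma_{t-1}(x). \label{eqn:conf}
\eeq
This is the heart of the argument and proceeds in two stages. Stage one controls the deviation of the \emph{frozen} posterior mean $\mu_{\cS(t)}$ (computed from the genuinely observed rewards $Y_{1:\cS(t)}$, since hallucinating with $\mu_{\cS(t)}(x_s)$ leaves the mean unchanged) from the true $f$, measured against the matching posterior deviation $\sigma_{\cS(t)}(x)$. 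Here I would invoke a self-normalized concentration inequality of the type established by \citet{chowdhury2017kernelized}, yielding $\abs{f(x) - \mu_{\cS(t)}(x)} \le \tilde\beta_t\,\sigma_{\cS(t)}(x)$ with $\tilde\beta_t = B + \frac{R}{\sqrt{\lambda}}\sqrt{2(\gamma_{\cS(t)} + \ln(1/\delta))}$, uniformly over all rounds without a union bound over a discretization of $D$ — precisely what buys the order-wise improvement over the confidence width of \citet{desautels2014parallelizing}. Stage two accounts for hallucination: for jointly Gaussian quantities the defining property $I(f(x);Y_{\cS(t)+1:t-1}\given Y_{1:\cS(t)}) \le \frac{1}{2}\ln(\xi_M)$ is exactly $\frac12\ln\big(\sigma^2_{\cS(t)}(x)/\sigma^2_{t-1}(x)\big) \le \frac12\ln\xi_M$, i.e. $\sigma_{\cS(t)}(x) \le \sqrt{\xi_M}\,\sigma_{t-1}(x)$. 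Chaining the two stages gives \eqref{eqn:conf} with exactly the algorithm's $\beta_t = \sqrt{\xi_M}\,\tilde\beta_t$.

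Given \eqref{eqn:conf}, the per-round regret bound is immediate from optimism. On the high-probability event, $f(x^\star) \le \mu_{\cS(t)}(x^\star) + \beta_t\sigma_{t-1}(x^\star) \le \mu_{\cS(t)}(x_t) + \beta_t\sigma_{t-1}(x_t)$, where the second inequality is the maximality of the chosen $x_t$; combining with the lower bound $f(x_t) \ge \mu_{\cS(t)}(x_t) - \beta_t\sigma_{t-1}(x_t)$ yields the instantaneous regret bound $f(x^\star) - f(x_t) \le 2\beta_t\,\sigma_{t-1}(x_t)$. Summing over $t$, using monotonicity $\beta_t \le \beta_T$ (as $\gamma_{\cS(t)} \le \gamma_T$), and applying Cauchy--Schwarz gives $R_T \le 2\beta_T\sqrt{T\sum_{t=1}^T \sigma^2_{t-1}(x_t)}$.

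The final ingredient converts the sum of posterior variances into the maximum information gain. Because the covariance update \eqref{eqn:cov-online} depends only on the sampled \emph{locations} and not on the (possibly hallucinated) reward values, $\sigma_{t-1}(x_t)$ is the standard deviation of a genuine GP posterior conditioned on the $t-1$ design points, so $\gamma_T \ge \frac{1}{2}\sum_{t=1}^T \ln\big(1 + \lambda^{-1}\sigma^2_{t-1}(x_t)\big)$. Together with $\sigma^2_{t-1}(x_t) \le k(x_t,x_t) \le 1$ and the elementary bound $s \le \ln(1+\lambda^{-1}s)/\ln(1+\lambda^{-1})$ for $s \in [0,1]$ (concavity of $s \mapsto \ln(1+\lambda^{-1}s)$), this yields $\sum_{t=1}^T \sigma^2_{t-1}(x_t) \le 2\gamma_T/\ln(1+\lambda^{-1}) = O(\gamma_T)$. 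Substituting and expanding $\beta_T = \sqrt{\xi_M}\big(B + \frac{R}{\sqrt\lambda}\sqrt{2(\gamma_T + \ln(1/\delta))}\big)$ produces the claimed $R_T = O\big(B\sqrt{\xi_M T\gamma_T} + \sqrt{\xi_M T\gamma_T(\gamma_T + \ln(1/\delta))}\big)$.

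The step I expect to be the main obstacle is Stage one of the confidence bound \eqref{eqn:conf}. Although the GP posterior mean $\mu_{\cS(t)}$ is used, the analysis is \emph{agnostic} — $f$ is a fixed RKHS element, not a GP sample — so the concentration cannot rely on Bayesian posterior contraction and must instead be justified by a self-normalized concentration inequality for the $R$-sub-Gaussian noise $\lbrace\epsilon_s\rbrace_{s \le \cS(t)}$, whose width is controlled through the log-determinant of the regularized kernel matrix, equivalently $\gamma_{\cS(t)}$. The remaining delicacy is that the algorithm deploys the smaller deviation $\sigma_{t-1}$ rather than $\sigma_{\cS(t)}$: I must ensure the variance-ratio inequality obtained from the conditional-information-gain property of $\xi_M$ is applied uniformly in $t$, so that a single round-independent $\xi_M$ (e.g. $e^{2\gamma_{M-1}}$, bounding the gain from the at most $M-1$ hallucinated points) inflates the width at every round without accumulating across rounds.
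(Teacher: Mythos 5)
Your proposal is correct and follows essentially the same route as the paper: the batch confidence bound is obtained by combining the self-normalized RKHS concentration inequality of \citet{chowdhury2017kernelized} (applied at the frozen index $\cS(t)$, uniformly over rounds) with the variance-ratio bound $\sigma_{\cS(t)}(x)/\sigma_{t-1}(x) \le \sqrt{\xi_M}$ derived from the conditional-information definition of $\xi_M$, and the regret then follows from the standard optimism decomposition, monotonicity of $\beta_t$, Cauchy--Schwarz, and the information-gain bound $\sum_{t=1}^T \sigma^2_{t-1}(x_t) = O(\gamma_T)$. The paper packages the first step as a containment of confidence intervals ($C^{seq}_{\cS(t)+1}(x) \subseteq C_t^{batch}(x)$), but this is the same argument as your two-stage chaining.
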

\vspace*{-3mm}
The regret bound for IGP-BUCB is $O\Big(\sqrt{\xi_M T}\big(B\sqrt{\gamma_T}+\gamma_T\big)\Big)$ with high probability, whereas
\citet{desautels2014parallelizing} show that GP-BUCB obtains regret $O\Big(\sqrt{\xi_M T}\big(B\sqrt{\gamma_T}+\gamma_T\ln^{3/2}(T)\big)\Big)$ with high probability. Hence, we obtain a $O(\ln^{3/2}T)$ multiplicative factor \textbf{improvement} in the final regret bound;  our numerical experiments reflect this improvement. 
\vspace*{-2mm}
\begin{mytheorem}[Regret bound for GP-BTS] 
\label{thm:regret-bound-TS-approx}
Let $D \subset \Real^d$ be compact and convex, $f$ be a member of the RKHS $\cH_k(D)$, with $\norm{f}_k \le B$ and the noise sequence $\lbrace \epsilon_t \rbrace_{t \ge 1}$ be conditionally $R$-sub-Gaussian. Then, for any $0 < \delta \le 1$, GP-BTS enjoys, with probability at least $1-\delta$, the regret bound $R_T=O\Big(\sqrt{\xi_M \big(\gamma_T+\ln(2/\delta)\big)d\ln(BdT)}\big(\sqrt{T\gamma_T}+B\sqrt{T\ln(2/\delta)}\big)\Big)$.
\end{mytheorem}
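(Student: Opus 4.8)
The plan is to adapt the frequentist Thompson sampling analysis for the strictly sequential setting (as in the GP-TS analysis of \citet{pmlr-v70-chowdhury17a}) to the batch case, with the batch/delay effect entering only through the inflation factor $\xi_M$ that already sits inside $v_t$. I would work on two high-probability events together with a discretization of $D$, and then run the standard Thompson sampling regret decomposition driven by anti-concentration (optimism) of the sampled function.

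First I would establish the two concentration ingredients. (i) Concentration of $f$: exactly as in the proof of Theorem~\ref{thm:regret-bound-UCB-approx} (with $\delta$ replaced by $\delta/2$), a self-normalized martingale inequality gives that, with probability at least $1-\delta/2$, $\abs{f(x)-\mu_{\cS(t)}(x)} \le \beta_t \sigma_{t-1}(x)$ for all $x \in D$ and all $t$; the pairing of the delayed mean $\mu_{\cS(t)}$ with the fully-updated variance $\sigma_{t-1}$ is what forces the $\sqrt{\xi_M}$ factor, accounting for the information in the hallucinated rewards. (ii) Concentration of the sample $f_t$: since $f_t(x) \sim \cN\!\left(\mu_{\cS(t)}(x),\, v_t^2\sigma_{t-1}^2(x)\right)$ for each fixed $x$, a Gaussian tail bound and a union bound over the finite set $D_t$ give $\abs{f_t(x)-\mu_{\cS(t)}(x)} \le v_t\sqrt{2\ln(\abs{D_t}t^2)}\,\sigma_{t-1}(x)$ for all $x\in D_t$, on an event of probability at least $1-\delta/2$. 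Using compactness and convexity of $D$, I would choose $D_t$ with $\abs{D_t}=O((Bdt^2)^d)$ so that the nearest point $[x]_t\in D_t$ to any $x\in D$ satisfies $\abs{f(x)-f([x]_t)}\le 1/t^2$ (via RKHS regularity bounding the variation of $f$), which makes the discretization error summable and contributes the $\sqrt{d\ln(BdT)}$ factor through $\sqrt{\ln\abs{D_t}}$.

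Next comes the core Thompson sampling argument, which I expect to be the main obstacle. Writing $\Delta(x)=f(x^\star)-f(x)$, call $x\in D_t$ \emph{saturated} at round $t$ if $\Delta(x)>c_t\sigma_{t-1}(x)$, where $c_t\bydef(\beta_t+v_t\sqrt{2\ln(\abs{D_t}t^2)})$ is the combined confidence width. The key lemma is an \emph{optimism} bound: conditioned on the $f$-concentration event and on $\cF_{t-1}$, Gaussian anti-concentration applied to $f_t([x^\star]_t)$ shows that $f_t$ overshoots $f$ at the discretized optimum with at least a constant probability $p$, forcing $\prob{x_t \text{ unsaturated} \given \cF_{t-1}} \ge p$. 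Letting $\bar{x}_t$ be the unsaturated point of smallest posterior standard deviation, a telescoping $\Delta(x_t)=[\Delta(x_t)-\Delta(\bar{x}_t)]+\Delta(\bar{x}_t)$, combined with the optimality $f_t(x_t)\ge f_t(\bar{x}_t)$ and the two concentration bounds, yields $\Delta(x_t)\le 2c_t\sigma_{t-1}(\bar{x}_t)+c_t\sigma_{t-1}(x_t)+O(1/t^2)$. Taking conditional expectations and using $\expect{\sigma_{t-1}(x_t)\given\cF_{t-1}}\ge p\,\sigma_{t-1}(\bar{x}_t)$ (the constant-probability optimism) gives $\expect{\Delta(x_t)\given\cF_{t-1}}\le(2/p+1)\,c_t\,\expect{\sigma_{t-1}(x_t)\given\cF_{t-1}}+O(1/t^2)$. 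The delicate points are securing the anti-concentration constant $p$ uniformly (which needs $v_t\ge\beta_t$, true by construction) and ensuring $[x^\star]_t\in D_t$ with controlled slack, all while the delayed mean $\mu_{\cS(t)}$ is in force.

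Finally I would sum over $t$ and convert conditional to actual regret by two Azuma--Hoeffding steps: one for the martingale $\sum_t(\Delta(x_t)-\expect{\Delta(x_t)\given\cF_{t-1}})$, whose increments are bounded by $\sup_x\abs{\Delta(x)}\le 2B$ (as $\abs{f(x)}\le\norm{f}_k\sqrt{k(x,x)}\le B$), producing the $B\sqrt{T\ln(2/\delta)}$ contribution, and one relating $\sum_t\expect{\sigma_{t-1}(x_t)\given\cF_{t-1}}$ to $\sum_t\sigma_{t-1}(x_t)$ (increments bounded by $\sigma_{t-1}(x_t)\le 1$). The residual sum is controlled by Cauchy--Schwarz and the maximum information gain, $\sum_{t=1}^T\sigma_{t-1}(x_t)\le\sqrt{T\sum_t\sigma_{t-1}^2(x_t)}=O(\sqrt{T\gamma_T})$. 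Substituting $c_t=O\!\big(\sqrt{\xi_M d\ln(BdT)\,(\gamma_T+\ln(2/\delta))}\big)$ and collecting terms then yields $R_T=O\big(\sqrt{\xi_M(\gamma_T+\ln(2/\delta))d\ln(BdT)}\,(\sqrt{T\gamma_T}+B\sqrt{T\ln(2/\delta)})\big)$ on the intersection of the good events, which holds with probability at least $1-\delta$.
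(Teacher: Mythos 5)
Your proposal is correct and follows essentially the same route as the paper: the paper sets up precisely these ingredients---batch concentration of $f$ (Lemma \ref{lem:true-concentration}, via Lemma \ref{lem:batch-concentration} with $\delta/2$), Gaussian concentration of the sampled $f_t$ over the discretization $D_t$ (Lemma \ref{lem:sampled-concentration}), anti-concentration optimism with constant $p=1/(4e\sqrt{\pi})$ (Lemma \ref{lem:compare-sample-with-original}), and saturated points with combined width $c_t=v_t(1+w_t)$---and then invokes the saturated/unsaturated decomposition and Azuma--Hoeffding machinery of \citet[Lemma 9,10,12 and 13]{chowdhury2017kernelized}, followed by $\sum_{t=1}^{T}\sigma_{t-1}(x_t)\le\sqrt{(2\lambda+1)T\gamma_T}$ from Lemma \ref{lem:sum-of-sd}, exactly as you outline. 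The only slip is cosmetic: the sample-concentration event with width $v_t\sqrt{2\ln(\abs{D_t}t^2)}$ holds with conditional probability $1-1/t^2$ per round and cannot be union-bounded into a single $1-\delta/2$ event; as your own $O(1/t^2)$ terms later reflect, its failures are absorbed into the additive $(2B+1)\pi^2/6$ term of the regret bound rather than charged to the confidence budget $\delta$.
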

\vspace*{-4mm}
The regret bound for GP-BTS is $O\Big(\sqrt{\xi_M Td\ln (BdT)}\left(B\sqrt{\gamma_T}+\gamma_T\right)\Big)$ with high probability. Though it is inferior to IGP-BUCB in terms of the dependency on dimension $d$, to the best of our knowledge, this represents the first
(frequentist) regret guarantee of Thompson sampling for batch Bayesian optimization.\\
\textbf{\textit{Remark.}} In the strictly sequential setup ($M=1$ and $\cS_t=t-1$), IGP-BUCB and GP-BTS reduce to the IGP-UCB and GP-TS algorithms of \citet{pmlr-v70-chowdhury17a}, respectively.

\vspace*{-1mm}
$\gamma_T$ is poly-logarithmic in $T$ for popular kernels \cite{srinivas2009gaussian}. Hence, the regret bounds of our algorithms grow sublinearly with $T$. But, if we naively run our algorithms with $\xi_M = \exp(2\gamma_{M-1})$ as discussed in Section \ref{sec:Algorithms}, then the regret bounds grow at least linearly with the batch size $M$. This can be obviated by incorporating the same initialization scheme (see Appendix \ref{appendix:init} for details) of \citet{desautels2014parallelizing}. 

\vspace*{-5mm}
\section{Experiments}
\vspace*{-3mm}
\label{sec:Experiments}
We numerically compare the performance of GP-BUCB \cite[Theorem 2, case 3]{desautels2014parallelizing} with our algorithms IGP-BUCB and GP-BTS in the kernelized setting. $\beta_t,v_t$ are set, unless otherwise specified, according to the theoretical bounds for the corresponding kernels, with $\delta=0.1, B= \max_{x\in D}\abs{f(x)}, \lambda=R^2=0.025$ and $\xi_M =1$ (similar to \cite{desautels2014parallelizing}). Unless otherwise specified, the time-average regret ($R_T/T$) of all algorithms in the simple batch setting (with $M=5$) are plotted in Figure \ref{fig:synthetic_plot_rkhs}. The experiments are performed on the following data:\\
\begin{figure}[t!]
\vskip -5mm
\centering
\subfigure[]{\includegraphics[height=1.15in,width=1.35in]{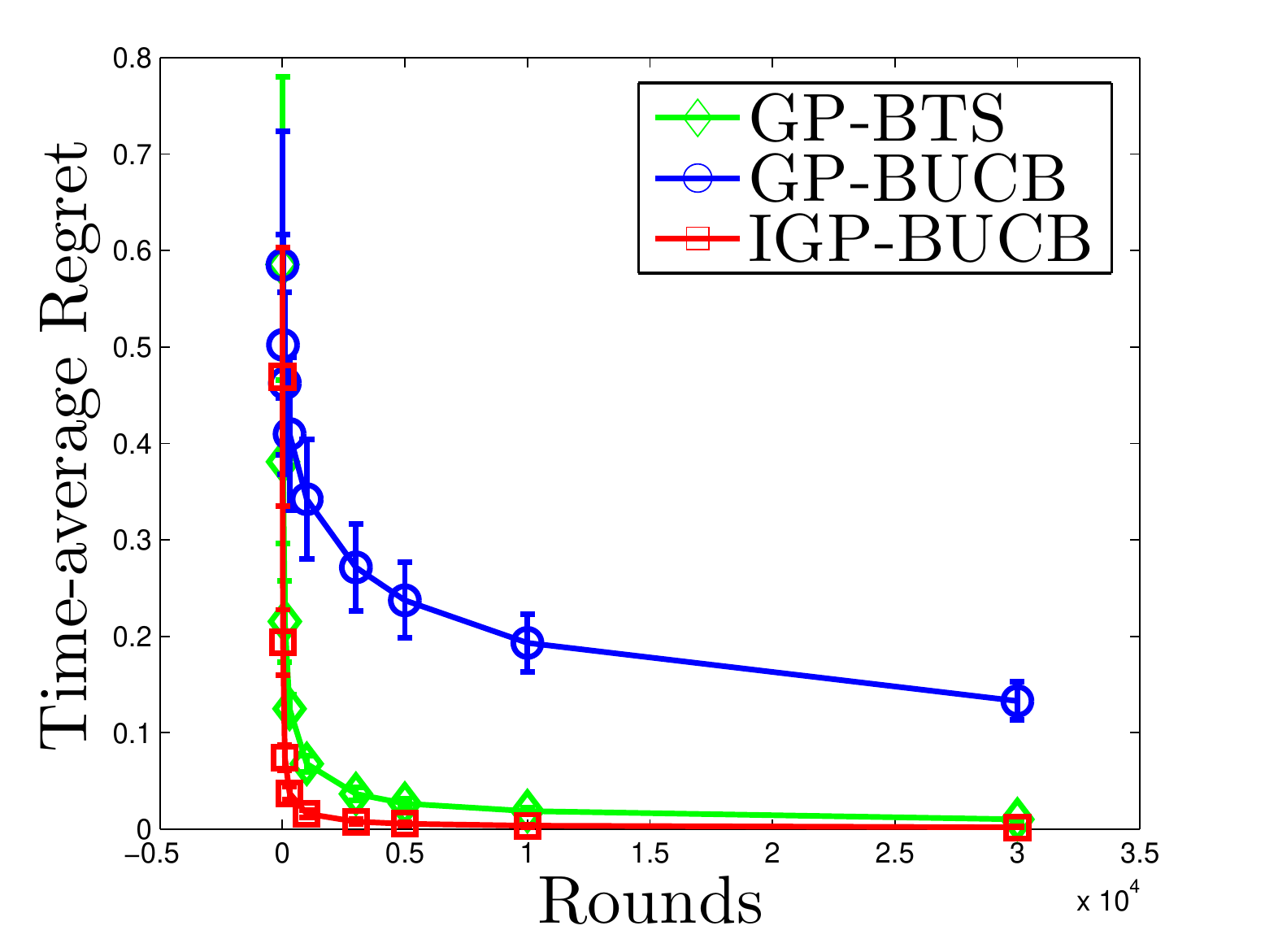}}
\subfigure[]{\includegraphics[height=1.15in,width=1.35in]{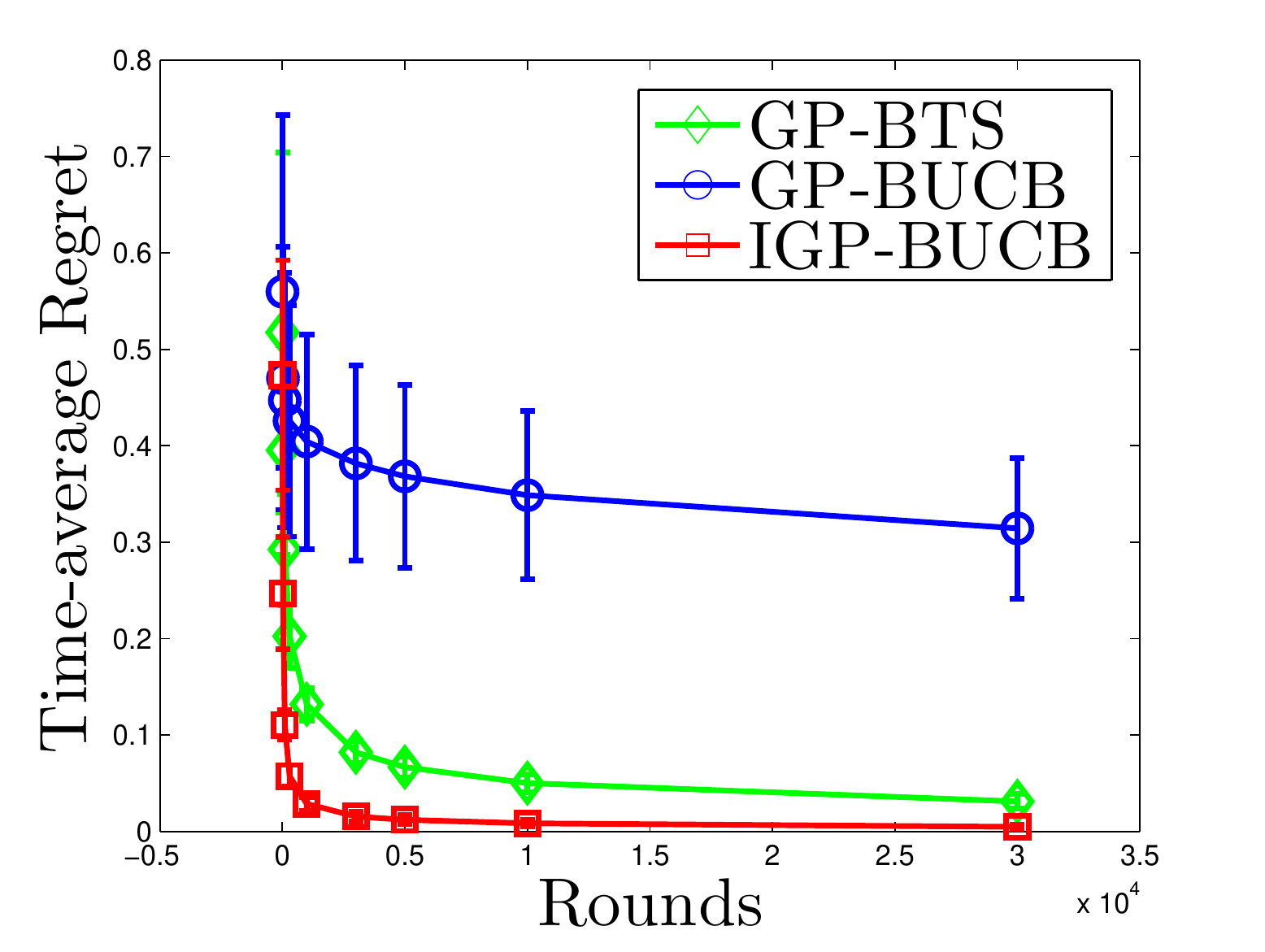}}
\subfigure[]{\includegraphics[height=1.15in,width=1.35in]{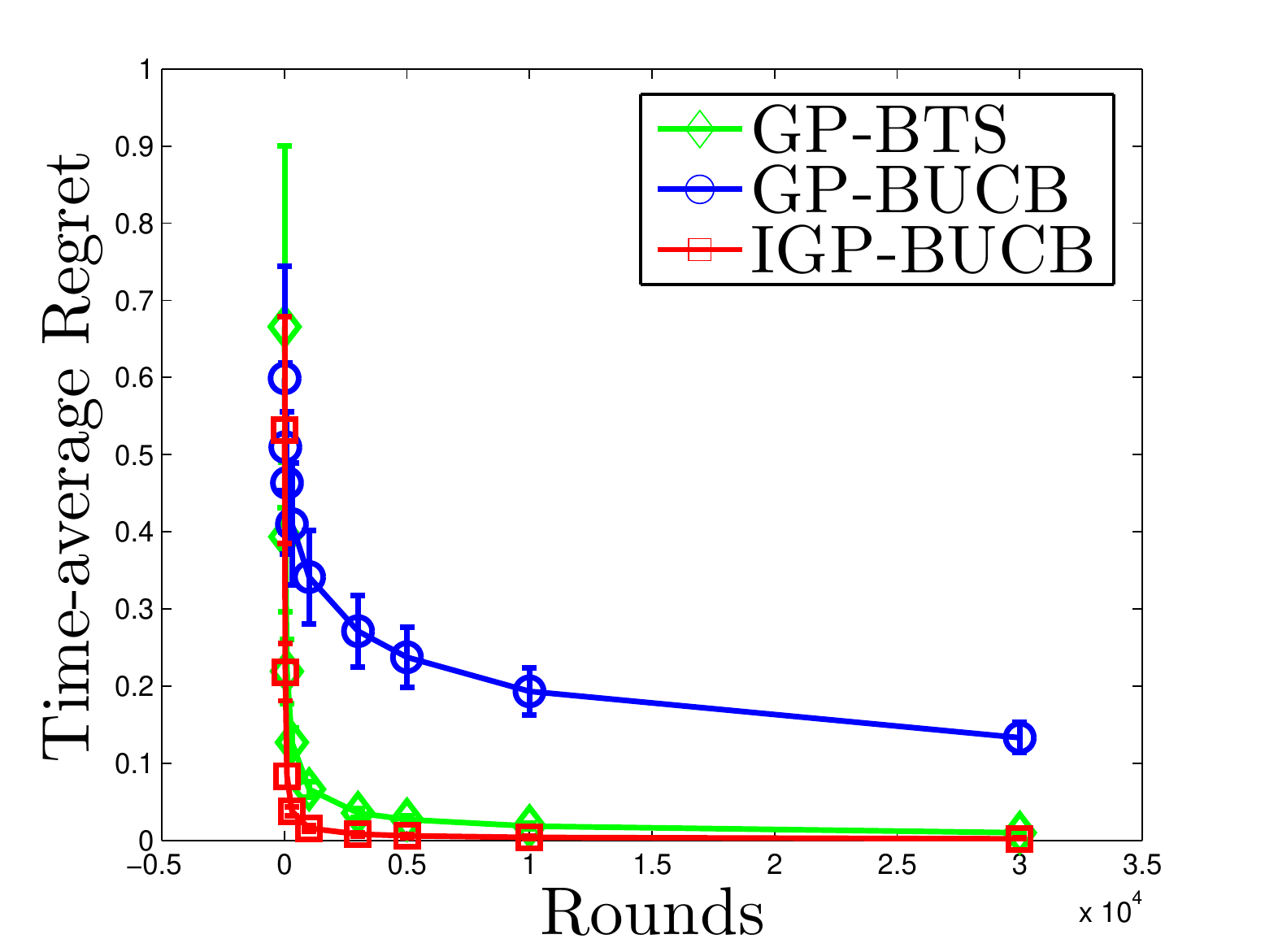}}
\subfigure[]{\includegraphics[height=1.15in,width=1.35in]{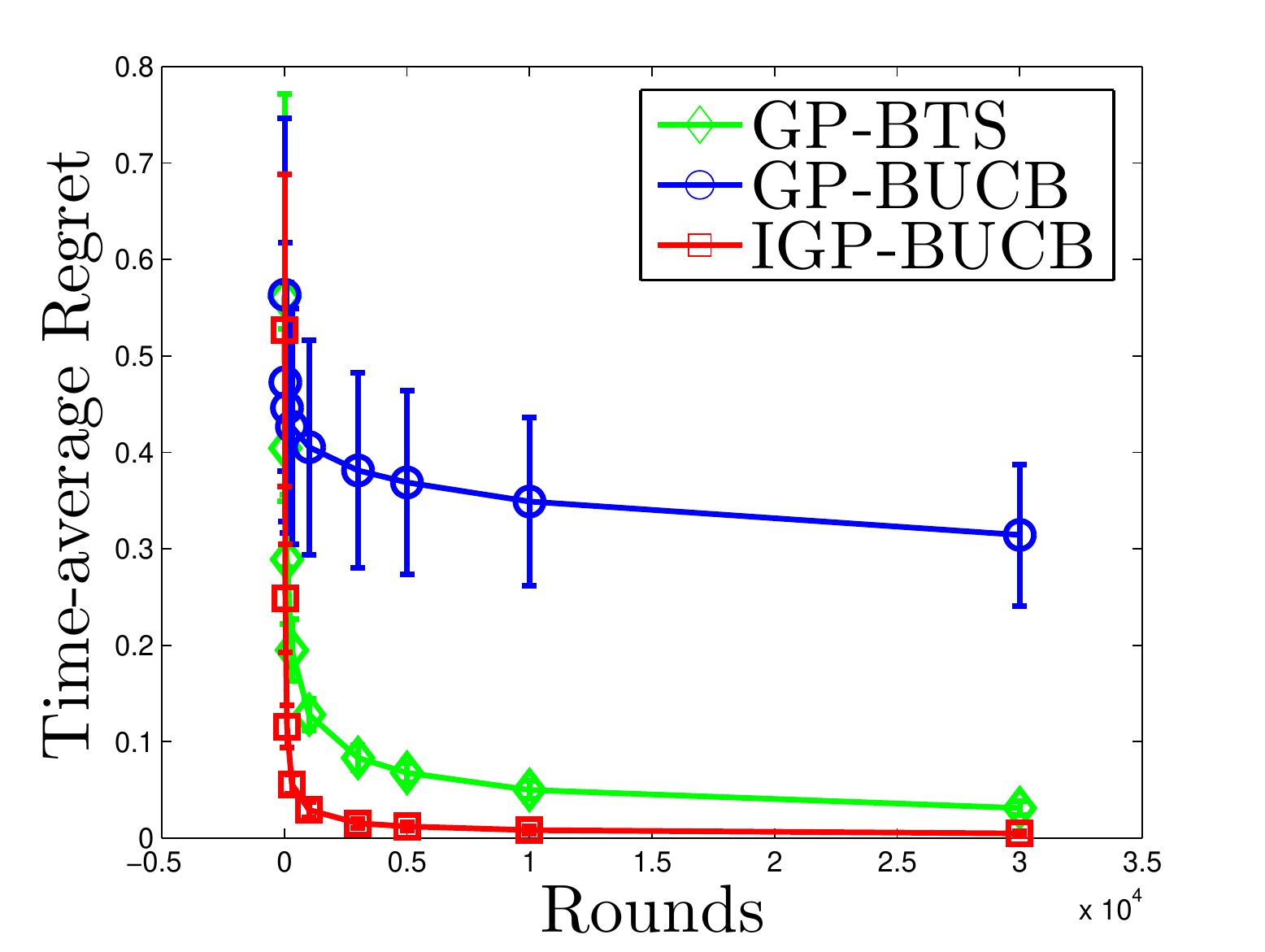}}
\subfigure[]{\includegraphics[height=1.15in,width=1.35in]{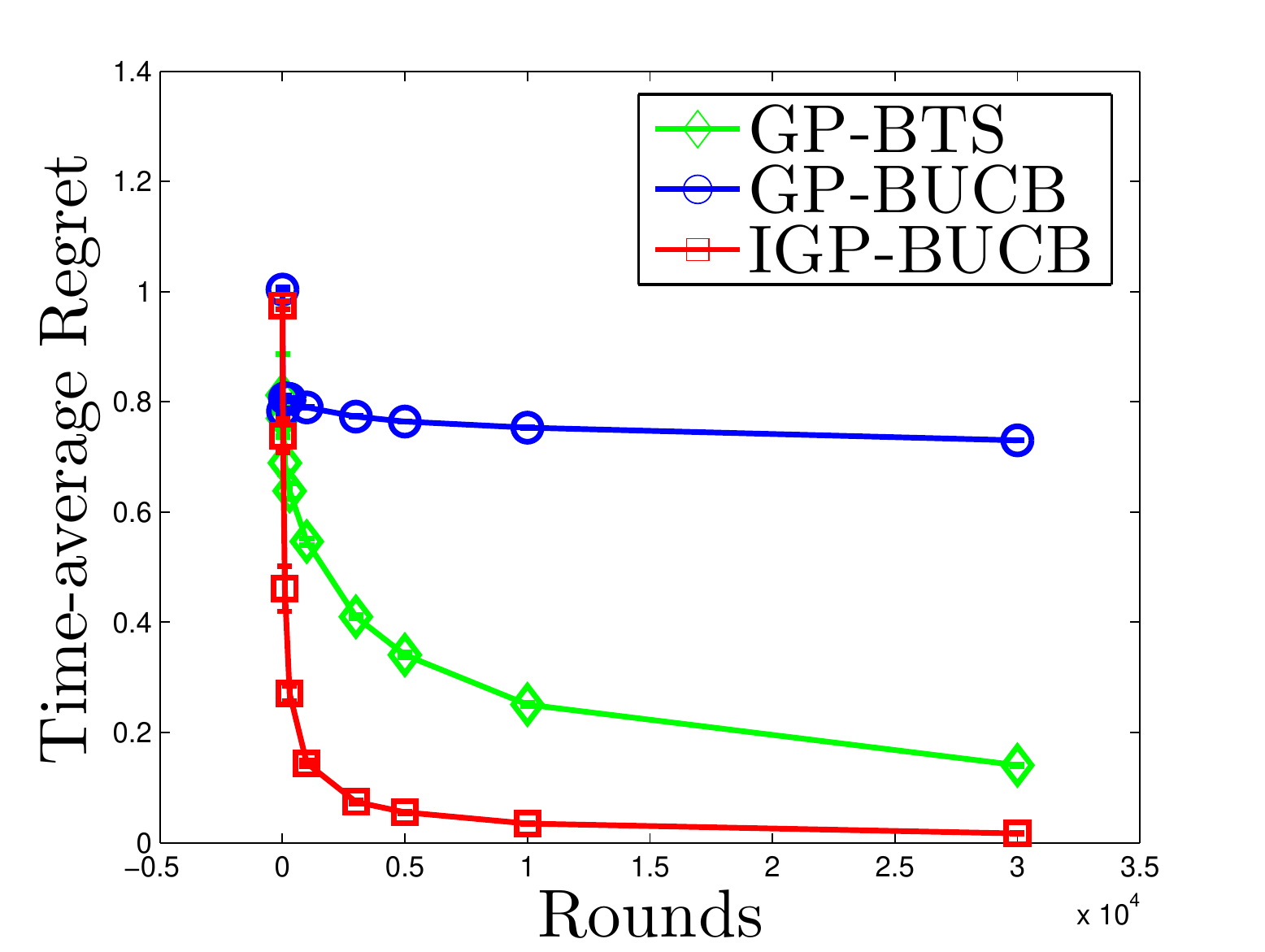}}
\subfigure[]{\includegraphics[height=1.15in,width=1.35in]{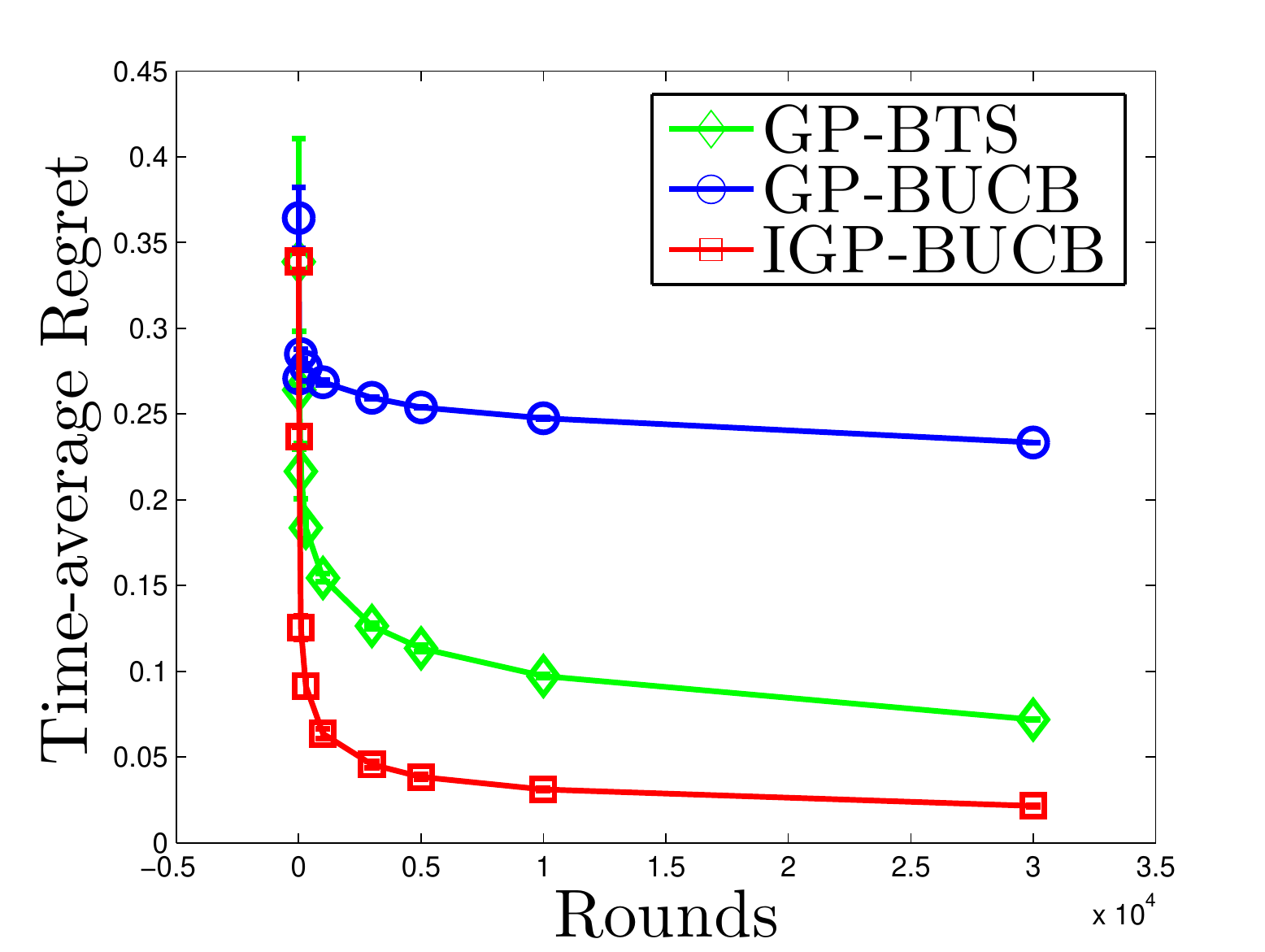}}
\subfigure[]{\includegraphics[height=1.15in,width=1.35in]{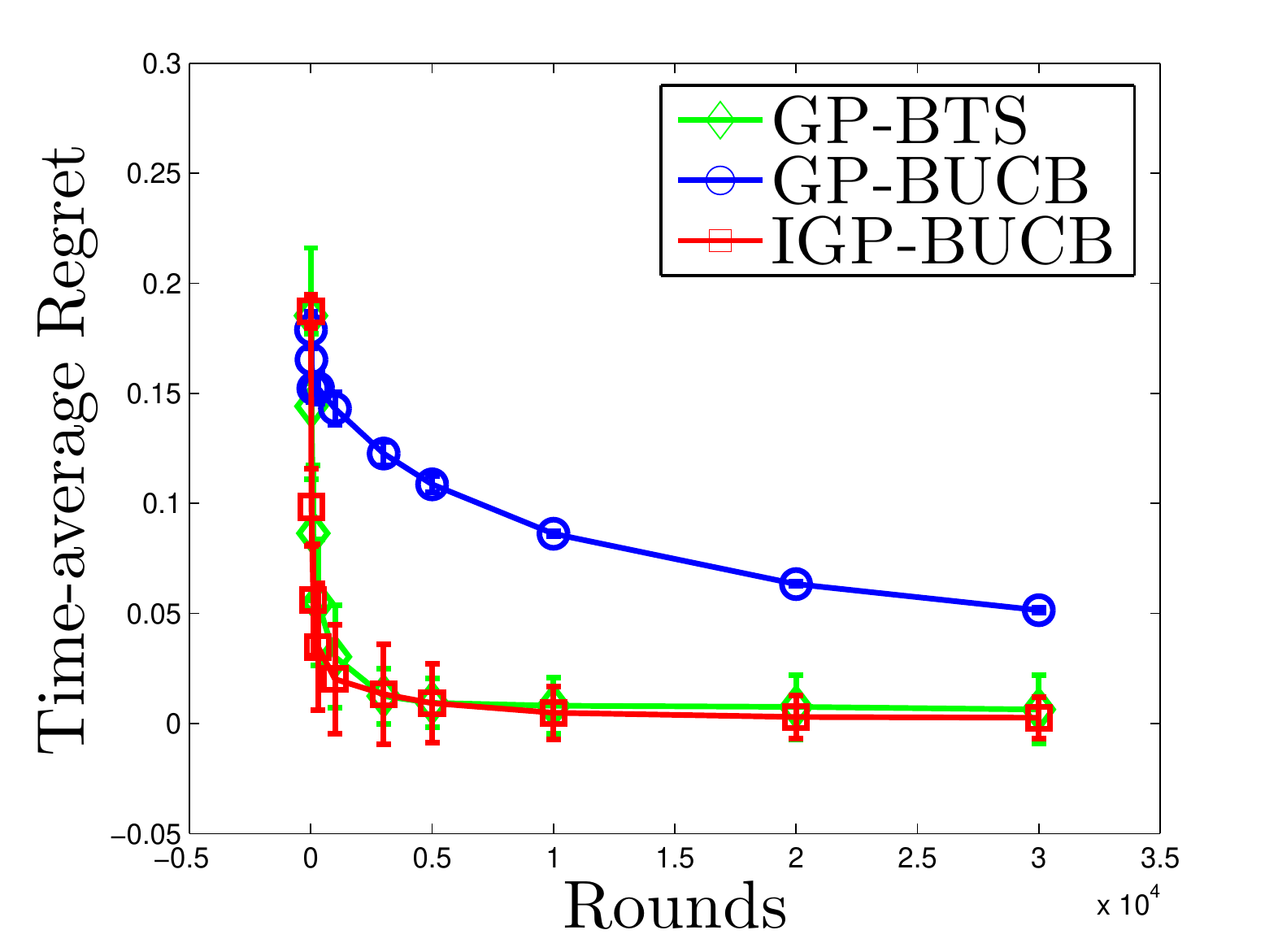}}
\subfigure[]{\includegraphics[height=1.15in,width=1.35in]{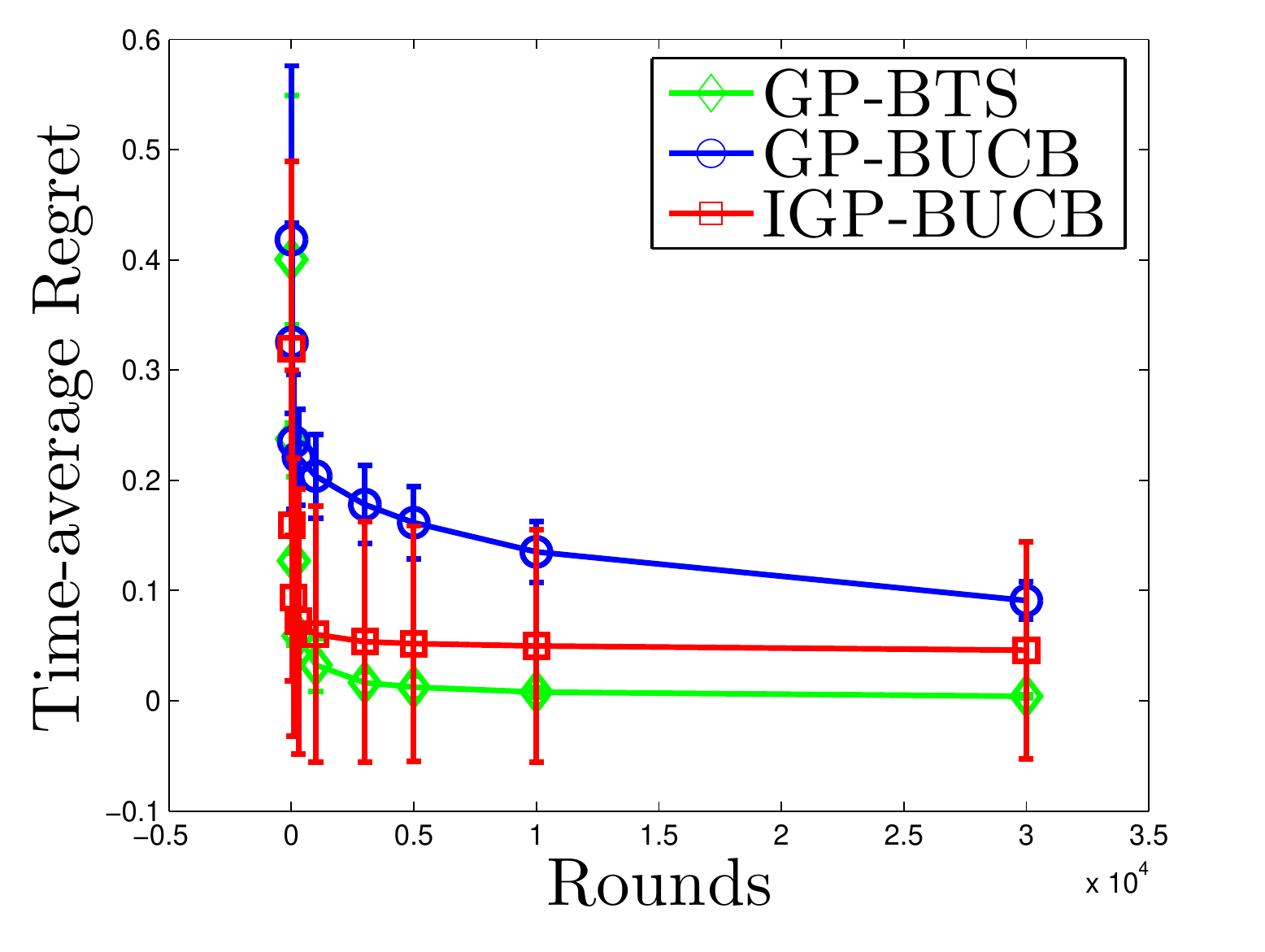}}
\caption{{\footnotesize Time-average regret for (a)  RKHS functions of SE kernel in simple batch setting, (b) RKHS functions of Mat$\acute{e}$rn kernel in simple batch setting, (c) RKHS functions of SE kernel in simple delay setting, (d) RKHS functions of Mat$\acute{e}$rn kernel in simple delay setting, (e) Cosines test function, (f) Rosenbrock test function, (g) temperature sensor data and (h) light sensor data.
}}
\label{fig:synthetic_plot_rkhs} 
\vskip -5mm
\end{figure}
\vspace*{-5mm}
\textbf{1. Functions from RKHS.} A set of $25$ functions is generated from RKHSs corresponding to the Mat$\acute{e}$rn and Squared-Exponential (SE) kernels with hyperparameters $l=0.2$, $\nu =2.5$, similar to the procedure of \citet{chowdhury2017kernelized}. $D$ is a discretization of $[0,1]$ into $100$ evenly spaced points. Comparison is done for both simple batch and simple delay settings with $M=5$.

\vspace*{-1mm}
\textbf{2. Benchmark functions.} We consider the Cosine and Rosenbrock test functions \cite{azimi2012hybrid}. $D$ is a $31 \times 31$ grid of evenly spaced points on $[0,1]^2$ and the kernel used is SE with $l^2=0.1$.

\vspace*{-1mm}
\textbf{3. Temperature\footnote{\url{http://db.csail.mit.edu/labdata/labdata.html}} and light sensor data\footnote{\url{http://www.cs.cmu.edu/~guestrin/Class/10708-F08/projects/lightsensor.zip}}.}  The algorithms are compared in the context of learning the maximum reading of the sensors \cite{srinivas2009gaussian}. The kernel used is the empirical covariance of the sensor readings, $\lambda$ is set to $5\%$ of the average empirical variance and $\gamma_t$ is set equal to $\ln t$.

\vspace*{-1mm}
\textbf{Observations:} IGP-BUCB outperforms GP-BUCB in all experiments, thus validating our theoretical bounds. For synthetic benchmarks, IGP-BUCB performs better than GP-BTS and for sensor data experiments, GP-BTS fares comparably, if not better, with IGP-BUCB.

\textbf{Challenges and future work.} The adaptive discretization in GP-BTS introduces an extra multiplicative factor in the regret bound. We believe the analysis can be done without resorting to the discretization and it remains an open problem even in the strictly sequential setting. From an applied point of view, there is  the important open question on how to efficiently and provably optimize the UCB rule or the functions randomly drawn from GPs.

\subsubsection*{Acknowledgments}
The authors are grateful to anonymous reviewers for providing useful comments. Sayak Ray Chowdhury is supported by Google India PhD fellowship.

\bibliographystyle{plainnat}
\bibliography{nips}

\newpage

\begin{appendix}
\huge 
\begin{center}
Appendix
\end{center}
\normalsize
\section{Relevant Definitions and Results}
\label{appendix:GP}
We first review some relevant definitions and results from the Gaussian process multi-armed bandits literature, which will be useful in the analysis of our algorithms. We first begin with the definition of \textit{Maximum Information Gain}, first appeared in \citet{srinivas2009gaussian}, which basically measures the reduction in uncertainty about the unknown function after some noisy observations (rewards).

For a function $f:D \ra \Real$ and any subset $A \subset D$ of its domain, we use $f_A := [f(x)]_{x\in A}$ to denote its restriction to $A$, i.e., a vector containing $f$'s evaluations at each point in $A$ (under an implicitly understood bijection from coordinates of the vector to points in $A$). In case $f$ is a random function, $f_A$ will be understood to be a random vector. For jointly distributed random variables $X, Y$, $I(X;Y)$ denotes the Shannon mutual information between them.

\begin{mydefinition}[Maximum Information Gain (MIG)] 
\label{def:mig}
Let $f:D\ra \Real$ be a (possibly random) real-valued function defined on a domain $\cX$, and $t$ a positive integer. For each subset $A \subset D$, let $Y_A$ denote a noisy version of $f_A$ obtained by passing $f_A$ through a channel $\prob{Y_A | f_A}$. The \textit{Maximum Information Gain (MIG)} about $f$ after $t$ noisy observations is defined as
\beqn
\gamma_t \bydef \max_{A \subset D : \abs{A}=t} I(f_A;Y_A).
\eeqn
(We omit mentioning explicitly the dependence on the channels for ease of notation.)
\end{mydefinition}

\textit{MIG} will serve as a key instrument to obtain our regret bounds by virtue of Lemma \ref{lem:sum-of-sd}.

For a kernel function $k: D\times D \ra \Real$ and points $x, x_1,\ldots,x_s \in D$, we define the vector $k_{s}(x)\bydef [k(x_1,x),\ldots,k(x_{s},x)]^T$ of kernel evaluations between $x$ and $x_1, \ldots, x_s$, and $K_{\{x_1, \ldots, x_s\}} \equiv K_{s} \bydef [k(x_i,x_j)]_{1 \le i,j \le s}$ be the kernel matrix induced by the $x_i$s. Also for each $x \in D$ and $\lambda > 0$, let  $\sigma_{s}^2(x) \bydef k(x,x) - k_{s}(x)^T(K_{s} + \lambda I)^{-1} k_{s}(x)$.

\begin{mylemma}[Information Gain and Predictive Variances under GP prior and additive Gaussian noise]
\label{lem:sum-of-sd}
Let $k: D \times D \ra \Real$ be a symmetric positive semi-definite kernel and $f \sim GP_{D}(0,k)$ a sample from the associated Gaussian process over $D$. For each subset $A \subset D$, let $Y_A$ denote a noisy version of $f_A$ obtained by passing $f_A$ through a channel that adds iid $\cN(0,\lambda)$ noise to each element of $f_A$. Then, \beq
\label{eqn:info-gain-zero}
\gamma_t  =  \max_{A \subset D : \abs{A}=t} \frac{1}{2} \ln \abs{I + \lambda^{-1}K_A},
\eeq
and
\beq
\label{eqn:info-gain-one}
\gamma_t = \max_{ \lbrace x_1,\ldots,x_t \rbrace \subset D}\frac{1}{2}\sum_{s=1}^{t}\ln\left(1 +\lambda^{-1}\sigma_{s-1}^2(x_s) \right).
\eeq
Further, if $k$ has bounded variance, i.e. $k(x,x) \le 1$ for all $x \in D$,
\beq
\label{eqn:info-gain-two}
\sum_{s=1}^{t}\sigma_{s-1}(x_s) \le \sqrt{t(2\lambda+1)\gamma_t}.
\eeq
\end{mylemma}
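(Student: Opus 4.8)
The plan is to prove the three displayed claims in sequence, each building on the previous. For (\ref{eqn:info-gain-zero}) I would fix a subset $A$ with $\abs{A}=t$ and compute $I(f_A;Y_A)$ directly from Gaussian entropies. Since $f \sim GP_D(0,k)$, the restriction satisfies $f_A \sim \cN(0,K_A)$, and the channel adds independent $\cN(0,\lambda I)$ noise, so $Y_A \sim \cN(0, K_A + \lambda I)$. Writing $I(f_A;Y_A) = H(Y_A) - H(Y_A \mid f_A)$ and using that the differential entropy of $\cN(0,\Sigma)$ in $t$ dimensions equals $\tfrac12 \ln\big((2\pi e)^t \abs{\Sigma}\big)$, I get $H(Y_A) = \tfrac12 \ln\big((2\pi e)^t \abs{K_A + \lambda I}\big)$ and $H(Y_A \mid f_A) = H(\epsilon) = \tfrac12 \ln\big((2\pi e \lambda)^t\big)$. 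Subtracting and absorbing $\lambda^{-t}$ into the determinant yields $I(f_A;Y_A) = \tfrac12 \ln \abs{I + \lambda^{-1}K_A}$, and maximizing over $A$ gives (\ref{eqn:info-gain-zero}).

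The heart of the argument is (\ref{eqn:info-gain-one}), for which I would establish the determinant factorization
\[
\abs{I + \lambda^{-1}K_t} = \prod_{s=1}^{t}\big(1 + \lambda^{-1}\sigma_{s-1}^2(x_s)\big)
\]
for any ordering $x_1,\ldots,x_t$ of $A$. I would prove this by induction on $t$ via a Schur complement: writing $K_t + \lambda I$ in block form with leading $(t-1)\times(t-1)$ block $K_{t-1} + \lambda I$ and border $k_{t-1}(x_t)$, the determinant equals $\abs{K_{t-1} + \lambda I}$ times the Schur complement $\lambda + k(x_t,x_t) - k_{t-1}(x_t)^T(K_{t-1}+\lambda I)^{-1}k_{t-1}(x_t) = \lambda + \sigma_{t-1}^2(x_t)$; dividing through by $\lambda^t$ supplies the inductive step. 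Combined with (\ref{eqn:info-gain-zero}), and the observation that $\abs{I + \lambda^{-1}K_A}$ depends only on the set $A$ and not on its ordering, this produces (\ref{eqn:info-gain-one}); the maximum over sequences and the maximum over sets coincide precisely because reordering leaves the determinant invariant.

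Finally, for (\ref{eqn:info-gain-two}) I would combine an elementary scalar inequality with Cauchy--Schwarz. Because the posterior variance never exceeds the prior variance, $\sigma_{s-1}^2(x_s) \le k(x_s,x_s) \le 1$, so $\lambda^{-1}\sigma_{s-1}^2(x_s) \le \lambda^{-1}$. Since $u \mapsto \ln(1+u)/u$ is decreasing, for $0 \le u \le \lambda^{-1}$ one has $u \le \tfrac{\lambda^{-1}}{\ln(1+\lambda^{-1})}\ln(1+u)$; applying this with $u = \lambda^{-1}\sigma_{s-1}^2(x_s)$, summing, and invoking (\ref{eqn:info-gain-one}) gives $\sum_{s=1}^{t}\sigma_{s-1}^2(x_s) \le \tfrac{1}{\ln(1+\lambda^{-1})}\sum_{s=1}^{t}\ln\big(1+\lambda^{-1}\sigma_{s-1}^2(x_s)\big) \le \tfrac{2\gamma_t}{\ln(1+\lambda^{-1})}$. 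I would then bound the prefactor using $\ln(1+\lambda^{-1}) \ge \tfrac{2}{2\lambda+1}$, verified by checking that $g(u) = \ln(1+u) - \tfrac{2u}{2+u}$ satisfies $g(0)=0$ and $g'(u) = \tfrac{u^2}{(1+u)(2+u)^2} \ge 0$, so that $\sum_{s=1}^{t}\sigma_{s-1}^2(x_s) \le (2\lambda+1)\gamma_t$. Cauchy--Schwarz then yields $\sum_{s=1}^{t}\sigma_{s-1}(x_s) \le \sqrt{t \sum_{s=1}^{t}\sigma_{s-1}^2(x_s)} \le \sqrt{t(2\lambda+1)\gamma_t}$.

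I expect the main obstacle to be the determinant factorization in the second step: one must correctly identify the Schur complement with $\lambda + \sigma_{t-1}^2(x_t)$ and then argue the order-invariance that lets the sequential (greedy) product coincide with the set-maximization defining $\gamma_t$. The remaining pieces---the Gaussian entropy computation and the scalar inequality together with Cauchy--Schwarz---are routine.
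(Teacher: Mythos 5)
Your proposal is correct, and where it overlaps with the paper's own proof it follows the same route. The paper proves only the inequality (\ref{eqn:info-gain-two}) and disposes of (\ref{eqn:info-gain-zero}) and (\ref{eqn:info-gain-one}) by citation to \citet{srinivas2009gaussian}; you instead prove all three claims from scratch. Your Gaussian-entropy computation for (\ref{eqn:info-gain-zero}) and your Schur-complement induction giving $\abs{I+\lambda^{-1}K_t}=\prod_{s=1}^{t}\big(1+\lambda^{-1}\sigma_{s-1}^2(x_s)\big)$ for (\ref{eqn:info-gain-one}) are both sound (the cited reference obtains the latter via the chain rule of mutual information, $I(f_A;Y_A)=\sum_s I(f_A;y_s\mid y_{1:s-1})$, which is equivalent to your linear-algebraic argument), and you correctly handle the point most often glossed over, namely that the product is order-invariant because the determinant is, so the maximum over sequences coincides with the maximum over sets. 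For (\ref{eqn:info-gain-two}) your argument is step-for-step the paper's: the bound $\sigma_{s-1}^2(x_s)\le \ln\big(1+\lambda^{-1}\sigma_{s-1}^2(x_s)\big)/\ln(1+\lambda^{-1})$ from monotonicity of $\alpha\mapsto\alpha/\ln(1+\alpha)$, summation against (\ref{eqn:info-gain-one}), the estimate $\ln(1+\alpha)\ge 2\alpha/(2+\alpha)$ to get the $(2\lambda+1)$ factor, and Cauchy--Schwarz. The net effect is that your write-up is self-contained where the paper leans on a citation, at the cost of some extra length; nothing in it is mathematically different or deficient.
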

\begin{proof}
(\ref{eqn:info-gain-zero}) and (\ref{eqn:info-gain-one}) follow from \citet{srinivas2009gaussian}. 

Further from our assumption $k(x,x) \le 1$, we have $0 \le \sigma_{s-1}^2(x)\le 1$ for all $x \in D$, and hence $\sigma_{s-1}^2(x_s) \le \ln\big(1+\lambda^{-1}\sigma_{s-1}^2(x_s)\big)/\ln(1+\lambda^{-1})$ since $\alpha/\ln(1+\alpha)$ is non-decreasing for any $\alpha \in [0,\infty)$. Therefore
\beqn
\sum_{s=1}^{t}\sigma^2_{s-1}(x_s) \le 2/\ln(1+\lambda^{-1}) \sum_{s=1}^{t}\dfrac{1}{2}\ln\left(1+\lambda^{-1}\sigma_{s-1}^2(x_s)\right)\le 2\gamma_t/\ln(1+\lambda^{-1}),
\eeqn
where the last inequality follows from (\ref{eqn:info-gain-one}). Now see that $2/\ln(1+\lambda^{-1}) \le (2+\lambda^{-1})/\lambda^{-1} =2\lambda+1$, since $\ln(1+\alpha) \ge 2\alpha/(2+\alpha)$ for any $\alpha \in [0,\infty)$. Hence $\sum_{s=1}^{t}\sigma^2_{s-1}(x_s) \le (2\lambda +1)\gamma_t$. Now (\ref{eqn:info-gain-two}) follows from the Cauchy-Schwartz inequality: $\sum_{s=1}^{t}\sigma_{s-1}(x_s) \le \sqrt{t\sum_{s=1}^{t}\sigma^2_{s-1}(x_s) }$.
\end{proof}

\paragraph{Bound on Maximum Information Gain} Note that the right hand sides of (\ref{eqn:info-gain-zero}) and (\ref{eqn:info-gain-one}) depend only on the kernel function $k$, domain $D$, and number of observations $t$.
\citet{srinivas2009gaussian} proved upper bounds over $\gamma_t$ for three commonly used kernels, namely \textit{Linear}, \textit{Squared Exponential} and \textit{Mat$\acute{e}$rn}, defined respectively as
\beqan
k_{Linear}(x,x') &=& x^Tx',\\
k_{SE}(x,x') &=& \exp\left(-s^2/2l^2\right), \\
k_{Mat\acute{e}rn}(x,x') &=& \frac{2^{1-\nu}}{\Gamma(\nu)}\left(\frac{s\sqrt{2\nu}}{l}\right)^\nu B_\nu\left(\frac{s\sqrt{2\nu}}{l}\right),
\eeqan
where $l > 0$ and $\nu > 0$ are hyper-parameters of the kernels, $s = \norm{x-x'}_2$ encodes the similarity between two points $x,x'\in D$ and $B_\nu$ denotes the \textit{modified Bessel function}. The bounds are given in Lemma \ref{lem:info-gain-bound}.
\begin{mylemma}[MIG for common kernels]
\label{lem:info-gain-bound}
Let $k: D \times D \ra \Real$ be a symmetric positive semi-definite kernel and $f \sim GP_{D}(0,k)$. Let $D$ be a compact and convex subset of $\Real^d$ and the kernel $k$ satisfies $k(x,x') \le 1$ for all $x,x' \in D$. Then for
\begin{itemize}
\item Linear kernel: $\gamma_t=O(d\ln t)$.
\item Squared Exponential kernel: $\gamma_t=O\left((\ln t)^{d}\right)$.
\item Mat$\acute{e}$rn kernel: $\gamma_t=O\left(t^{d(d+1)/(2\nu+d(d+1))}\ln t\right)$.
\end{itemize}
\end{mylemma}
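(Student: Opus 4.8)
The plan is to start from the log-determinant form of the information gain, equation (\ref{eqn:info-gain-zero}) in Lemma \ref{lem:sum-of-sd}, namely $\gamma_t = \max_{\abs{A}=t}\frac12\ln\abs{I+\lambda^{-1}K_A}$, and reduce the problem to understanding how the spectrum of the kernel controls this log-determinant. Writing $\hat\lambda_1 \ge \cdots \ge \hat\lambda_t \ge 0$ for the eigenvalues of $K_A$, we have $\frac12\ln\abs{I+\lambda^{-1}K_A}=\frac12\sum_i\ln(1+\lambda^{-1}\hat\lambda_i)$, and the bounded-variance assumption $k(x,x)\le 1$ forces $\operatorname{tr}(K_A)=\sum_i\hat\lambda_i\le t$. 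So the whole game is to control a sum of concave $\ln(1+\cdot)$ terms under a total-mass budget, and the eigenvalue decay profile decides the answer. For the \emph{Linear} kernel this is immediate: $K_A=XX^T$ for a $t\times d$ matrix $X$, so $K_A$ has rank at most $d$; since each of its at most $d$ nonzero eigenvalues is bounded by $\operatorname{tr}(K_A)\le t$, concavity gives $\gamma_t\le \frac{d}{2}\ln(1+\lambda^{-1}t)=O(d\ln t)$.

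For the \emph{SE} and \emph{Mat\'ern} kernels I would pass from the finite matrix spectrum to the spectrum of the kernel integral operator via Mercer's theorem, writing $k(x,x')=\sum_{s\ge 1}\lambda_s\phi_s(x)\phi_s(x')$ with $\lbrace\lambda_s\rbrace$ the operator eigenvalues with respect to the uniform measure on the compact $D$. One then controls the discrete log-determinant by splitting the spectrum at a cutoff $T_*$: the top $T_*$ eigendirections contribute at most $O(T_*\ln t)$ to the information gain (each behaving like a separate $\ln(1+\lambda^{-1}t)$ term), while the residual tail $B(T_*)\bydef\sum_{s>T_*}\lambda_s$ contributes a term proportional to $t\,B(T_*)$, up to logarithmic and discretization factors. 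This is exactly the mechanism of the general spectral bound of \citet{srinivas2009gaussian}, yielding schematically $\gamma_t=O\big(T_*\ln t + t\,B(T_*)\ln t\big)$ for every $T_*\le t$, after which it remains only to insert the decay of each kernel and optimize over $T_*$.

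The decay rates are the heart of the matter. For the SE kernel the operator eigenvalues decay faster than any polynomial (near-geometrically, with the count of eigenvalues exceeding $\epsilon$ growing like $(\ln(1/\epsilon))^d$), so $B(T_*)$ shrinks extremely fast and the choice $T_*=\Theta\big((\ln t)^d\big)$ balances the two terms, giving $\gamma_t=O\big((\ln t)^d\big)$. For the Mat\'ern kernel with smoothness $\nu$ the eigenvalues decay only polynomially, $\lambda_s=O\big(s^{-(2\nu+d)/d}\big)$, so $B(T_*)$ decays polynomially in $T_*$; balancing $T_*\ln t$ against $t\,B(T_*)$ and carrying through the discretization estimate of \citet{srinivas2009gaussian} produces the stated exponent $\gamma_t=O\big(t^{d(d+1)/(2\nu+d(d+1))}\ln t\big)$.

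I expect the main obstacle to be the passage from the continuous domain to finite point sets. Mercer's theorem describes the operator spectrum relative to a measure, whereas $\gamma_t$ is a maximum over finite subsets $A$, so one must discretize $D$ on a fine grid, bound the approximation error uniformly, and prevent the discretization cardinality from inflating the logarithmic factors. Controlling this discretization cleanly, and tracking the constants in the head/tail split carefully enough that the Mat\'ern exponent emerges as stated rather than as a looser value, is the technically delicate part; the eigenvalue-decay estimates themselves are classical and the optimization over $T_*$ is routine once they are available.
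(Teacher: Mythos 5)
Your proposal is correct and matches the paper's treatment: the paper offers no proof of its own for this lemma, deferring entirely to \citet{srinivas2009gaussian}, and your sketch is a faithful reconstruction of that cited argument (log-determinant form, rank bound for the linear kernel, Mercer expansion with a head/tail spectral split for SE and Mat\'ern, including the correct observation that the $(d+1)$ factor in the Mat\'ern exponent arises from the discretization step rather than the raw eigenvalue decay). No gaps worth flagging at the level of detail the statement demands.
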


Note that, the Maximum Information Gain $\gamma_t$ depends only \textit{sublinearly} on the number of observations $t$ for all these kernels.

Lemma \ref{lem:true-function-bound} (appeared independently in \citet{pmlr-v70-chowdhury17a} and 
\citet{durand2017streaming}.) gives a concentration bound for a member $f$ of the RKHS $\cH_k(D)$. 

\begin{mylemma}[Concentration of an RKHS member]
\label{lem:true-function-bound}
Let $k: D \times D \to \Real$ be a symmetric, positive-semidefinite kernel and $f:D \to \Real$ be a member of the RKHS $\cH_k(D)$ of real-valued functions on $D$ with kernel $k$. Let $\lbrace x_t \rbrace_{t \ge 1}$ and $\lbrace \epsilon_t \rbrace_{t \ge 1}$ be stochastic processes such that $\lbrace x_t \rbrace_{t \ge 1}$ form a predictable process, i.e., $x_t \in \sigma(\lbrace x_s, \epsilon_s\rbrace_{s = 1}^{t-1})$ for each $t$, and $\lbrace \epsilon_t \rbrace_{t \ge 1}$ is conditionally $R$-sub-Gaussian for a positive constant $R$, i.e.,
\beqn
\forall t \ge 0,\;\;\forall \lambda \in \Real, \;\; \expect{e^{\lambda \epsilon_t} \given \cF_{t-1}} \le \exp\left(\frac{\lambda^2R^2}{2}\right),
\eeqn
where $\cF_{t-1}$ is the $\sigma$-algebra generated by $\lbrace x_s, \epsilon_s\rbrace_{s = 1}^{t-1}$ and $x_t$. Let $\lbrace y_t\rbrace_{t\ge 1}$ be a sequence of noisy observations at the query points $\lbrace x_t \rbrace_{t \ge 1}$, where $y_t=f(x_t)+\epsilon_t$. For $\lambda > 0$ and $x \in D$, let
\beqan
\mu_{t-1}(x)&:=& k_{t-1}(x)^T(K_{t-1} + \lambda I)^{-1}Y_{t-1},\\
\sigma_{t-1}^2(x)&:=&k(x,x) - k_{t-1}(x)^T(K_{t-1} + \lambda I)^{-1} k_{t-1}(x),
\eeqan
where $Y_{t-1}\bydef[y_1,\ldots,y_{t-1}]^T$ denotes the vector of observations at $\lbrace x_1,\ldots,x_{t-1}\rbrace$.
Then, for any $0 < \delta \le 1$, with probability at least $1-\delta$, uniformly over $t \ge 1, x \in D$,
\beqn
\abs{f(x)-\mu_{t-1}(x)}\le \Big(\norm{f}_k + \dfrac{R}{\sqrt{\lambda}}\sqrt{2\big(\ln(1/\delta)+ \gamma_{t-1} \big)}\Big)\sigma_{t-1}(x),
\eeqn
where $\gamma_t$ is the Maximum Information Gain about any $f \sim GP_D(0,k)$ after $t$ noisy observations obtained by passing $f$ through an iid Gaussian channel $\cN(0,\lambda)$.
\end{mylemma}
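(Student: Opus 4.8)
The plan is to lift the problem into the RKHS via a feature map $\phi:D\ra\cH_k(D)$ with $k(x,x')=\inner{\phi(x)}{\phi(x')}$ and $f(x)=\inner{\phi(x)}{f}$, and to recognize the posterior mean $\mu_{t-1}$ as the ridge-regression estimator of $f$ in feature space. Writing $\Phi_{t-1}$ for the operator stacking $\phi(x_1),\ldots,\phi(x_{t-1})$, $E_{t-1}=[\epsilon_1,\ldots,\epsilon_{t-1}]^T$, and $V_{t-1}\bydef\lambda I+\Phi_{t-1}^T\Phi_{t-1}$, the push-through identity $(\Phi^T\Phi+\lambda I)^{-1}\Phi^T=\Phi^T(\Phi\Phi^T+\lambda I)^{-1}$ lets me rewrite the given kernel expressions as the feature-space identities $\mu_{t-1}(x)=\inner{\phi(x)}{V_{t-1}^{-1}\Phi_{t-1}^T Y_{t-1}}$ and $\sigma_{t-1}^2(x)=\lambda\inner{\phi(x)}{V_{t-1}^{-1}\phi(x)}$, so that $\norm{\phi(x)}_{V_{t-1}^{-1}}=\sigma_{t-1}(x)/\sqrt{\lambda}$. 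I would verify these two identities first, since they convert the target inequality into a statement purely about $V_{t-1}$.

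Next I would decompose the prediction error. Substituting $Y_{t-1}=\Phi_{t-1}f+E_{t-1}$ and using $V_{t-1}-\Phi_{t-1}^T\Phi_{t-1}=\lambda I$ gives the clean identity
\beqn
f(x)-\mu_{t-1}(x)=\inner{\phi(x)}{V_{t-1}^{-1}\big(\lambda f-\Phi_{t-1}^T E_{t-1}\big)},
\eeqn
a deterministic bias term $\lambda f$ plus a stochastic noise term $\Phi_{t-1}^T E_{t-1}$. Cauchy--Schwarz in the $V_{t-1}^{-1}$ inner product together with the triangle inequality then yields
\beqn
\abs{f(x)-\mu_{t-1}(x)}\le\frac{\sigma_{t-1}(x)}{\sqrt{\lambda}}\Big(\lambda\norm{f}_{V_{t-1}^{-1}}+\norm{\Phi_{t-1}^T E_{t-1}}_{V_{t-1}^{-1}}\Big).
\eeqn
The bias is immediate: $V_{t-1}\succeq\lambda I$ forces $V_{t-1}^{-1}\preceq\lambda^{-1}I$, hence $\lambda\norm{f}_{V_{t-1}^{-1}}\le\sqrt{\lambda}\norm{f}_k$. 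It remains only to bound the self-normalized noise $\norm{\Phi_{t-1}^T E_{t-1}}_{V_{t-1}^{-1}}$ uniformly over $t$.

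The hard part will be this self-normalized martingale bound, for which I would use the method of mixtures. For each fixed $\eta\in\cH_k(D)$, the process $\exp\!\big(R^{-1}\inner{\eta}{\Phi_{t-1}^T E_{t-1}}-\tfrac12\eta^T\Phi_{t-1}^T\Phi_{t-1}\eta\big)$ is a nonnegative supermartingale: because $x_s$ is $\cF_{s-1}$-measurable, $\inner{\eta}{\phi(x_s)}$ is predictable, and conditional $R$-sub-Gaussianity of $\epsilon_s$ applied with $\lambda=R^{-1}\inner{\eta}{\phi(x_s)}$ makes each one-step factor have conditional mean at most one. Mixing over $\eta$ against a mean-zero Gaussian with covariance $\lambda^{-1}I$ produces a mixed supermartingale whose integral evaluates in closed form to a ratio of determinants; a stopping-time maximal inequality then gives, with probability at least $1-\delta$ and uniformly in $t$,
\beqn
\norm{\Phi_{t-1}^T E_{t-1}}_{V_{t-1}^{-1}}^2\le 2R^2\Big(\ln(1/\delta)+\tfrac12\ln\det\big(I+\lambda^{-1}K_{t-1}\big)\Big).
\eeqn
Finally I would invoke (\ref{eqn:info-gain-zero}) from Lemma \ref{lem:sum-of-sd}, which bounds $\tfrac12\ln\det(I+\lambda^{-1}K_{t-1})\le\gamma_{t-1}$, and combine the bias and noise estimates to recover exactly the stated inequality. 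The one genuine subtlety is that $\cH_k(D)$ may be infinite-dimensional, so the Gaussian mixing measure and the determinants need care; I would make the argument rigorous by noting that $\Phi_{t-1}^T\Phi_{t-1}$ has range of dimension at most $t-1$, which collapses every integral and determinant to a finite-dimensional computation and legitimizes the reduction to the standard self-normalized bound.
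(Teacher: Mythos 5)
Your proposal is correct and is essentially the paper's own route: the paper proves this lemma by deferring to Theorem 2.1 of \citet{durand2017streaming}, whose argument is exactly your feature-space ridge-regression decomposition (bias term bounded via $V_{t-1}\succeq\lambda I$ by $\sqrt{\lambda}\norm{f}_k$, noise term handled by a method-of-mixtures self-normalized martingale bound), followed by Sylvester's identity and the bound $\tfrac12\ln\det\left(I+\lambda^{-1}K_{t-1}\right)\le\gamma_{t-1}$ from Lemma~\ref{lem:sum-of-sd}. The only step to tighten is the infinite-dimensional caveat you yourself flag: to get uniformity over $t$ from Ville's inequality you need a \emph{single} mixing measure valid for all rounds, and since the span of the features grows with $t$, the clean fix is not a per-round finite-dimensional collapse but mixing over a sample $g\sim GP_D(0,\lambda^{-1}k)$, whose finite-dimensional marginals at the queried points are exactly the finite-rank Gaussians your collapse produces, consistently across rounds.
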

\begin{proof}
The proof follows from the proof of Theorem 2.1 in \citet{durand2017streaming}.
\end{proof}

Now we define a quantity $\xi_M$ (modified from \citet{kandasamy2018parallelised}), which essentially measures the information about $f$ that gets hallucinated each round due to at most $M-1$ hallucinated observations, conditioned on the actual observations.
\begin{mydefinition}[Maximum Hallucinated Information]
\label{def:hallucinated-info}
Let $f:D\ra \Real$, $\cS(t) : \mathbb{N} \rightarrow \mathbb{Z}_{+}$ be a mapping
such that $\mathcal{S}(t)  \le t-1$ for all $t \ge 1$ and $M$ be a constant such that $t-\cS(t) \le M$ for all $t \ge 1$. Then, $\xi_M$ denotes the maximum hallucinated information about $f$ due to $t-\cS(t)-1$ hallucinated observations (there are at most $M-1$ of them at every $t$) in the sense that, for all $x \in D$,
\beqn
I\big(f(x);Y_{\cS(t)+1:t-1}\given Y_{1:\cS(t)}\big) \le 1/2 \ln(\xi_M),
\eeqn
where $Y_{1:\cS(t)}$ is the vector of actual observations up to round $t$, $Y_{\cS(t)+1:t-1}$ is the vector of hallucinated observations and $I\left(f(x);Y_{\cS(t)+1:t-1}\given Y_{1:\cS(t)}\right)$ is the conditional mutual information between $f(x)$ and $Y_{\cS(t)+1:t-1}$, given $Y_{1:\cS(t)}$.
\end{mydefinition}
The following result is modified from \citet{desautels2014parallelizing}, and provide a choice of $\xi_M$.
\begin{mylemma}[Relation between the Maximum Information Gain and Hallucinated Information]
Let $f:D\ra \Real$ be a function, $\cS(t) : \mathbb{N} \rightarrow \mathbb{Z}_{+}$ be a mapping
such that $\mathcal{S}(t)  \le t-1$ for all $t \ge 1$ and $M$ be a constant such that $t-\cS(t) \le M$ for all $t \ge 1$. Further, let $\gamma_t$ be the \textit{Maximum Information Gain} about $f$ after $t$ observations (Definition \ref{def:mig}) and $\xi_M$ be the maximum hallucinated information (Definition \ref{def:hallucinated-info}). Then, $\xi_M = \exp(2\gamma_{M-1})$.
\end{mylemma}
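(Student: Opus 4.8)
The goal is to exhibit one admissible value of $\xi_M$, and since Definition \ref{def:hallucinated-info} only requires $\tfrac{1}{2}\ln(\xi_M)$ to upper bound the conditional mutual information $I\big(f(x);Y_{\cS(t)+1:t-1}\given Y_{1:\cS(t)}\big)$ uniformly over $x \in D$, it suffices to prove that this quantity is at most $\gamma_{M-1}$ for every $x$; the choice $\xi_M = \exp(2\gamma_{M-1})$ then satisfies $\tfrac{1}{2}\ln(\xi_M)=\gamma_{M-1}$. The plan is to first reduce the single-point conditional information to a conditional information gain over the set of hallucinated query points, and then to recognize that conditional information gain as an \emph{unconditional} information gain computed under the GP posterior, which is dominated by the prior information gain through the determinant characterization (\ref{eqn:info-gain-zero}).

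Let $B \bydef \lbrace x_{\cS(t)+1},\ldots,x_{t-1}\rbrace$ denote the query points carrying the hallucinated observations; since $t-\cS(t) \le M$ we have $\abs{B} \le M-1$. First I would enlarge the first argument using monotonicity of mutual information, $I\big(f(x);Y_{\cS(t)+1:t-1}\given Y_{1:\cS(t)}\big) \le I\big(f_{B\cup\lbrace x\rbrace};Y_{\cS(t)+1:t-1}\given Y_{1:\cS(t)}\big)$. Because the observation noise is independent Gaussian, $Y_{\cS(t)+1:t-1}$ depends on $f$ only through $f_B$ and is thus conditionally independent of $f(x)$ (and of $Y_{1:\cS(t)}$) given $f_B$, so the chain rule collapses the right-hand side to $I\big(f_B;Y_{\cS(t)+1:t-1}\given Y_{1:\cS(t)}\big)$. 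This leaves exactly the conditional information gain of the $\abs{B}\le M-1$ hallucinated observations.

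The key step is to identify this conditional information gain with the (unconditional) information gain of the same observations computed under the posterior process $GP_D(\mu_{\cS(t)},k_{\cS(t)})$ instead of the prior $GP_D(0,k)$. By (\ref{eqn:info-gain-zero}) applied to the posterior kernel, it equals $\tfrac{1}{2}\ln\abs{I+\lambda^{-1}\widetilde K_B}$, where $\widetilde K_B \bydef [k_{\cS(t)}(u,v)]_{u,v\in B}$. I would then establish the Loewner domination $\widetilde K_B \preceq K_B$: each online update (\ref{eqn:cov-online}) subtracts a rank-one positive semi-definite term from the current kernel matrix restricted to $B$, so by induction the posterior kernel matrix is dominated by the prior one. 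Monotonicity of $A \mapsto \ln\abs{I+\lambda^{-1}A}$ under the PSD order gives $\tfrac{1}{2}\ln\abs{I+\lambda^{-1}\widetilde K_B} \le \tfrac{1}{2}\ln\abs{I+\lambda^{-1}K_B}$, and maximizing over all $B$ with $\abs{B}\le M-1$ identifies this last expression, again via (\ref{eqn:info-gain-zero}), with $\gamma_{M-1}$.

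The main obstacle is the conditional-to-unconditional passage: one must argue rigorously that conditioning on the already-collected observations can only shrink the information contributed by the hallucinated ones. Making this precise is exactly where the GP posterior-covariance machinery is needed — writing the conditional mutual information through the posterior kernel and deriving the domination $\widetilde K_B \preceq K_B$ from the rank-one downdates in (\ref{eqn:cov-online}). By contrast, the data-processing reductions of the first step are routine once noise independence is invoked, though the count $\abs{B}\le M-1$ (rather than $M$) must be tracked so that $\gamma_{M-1}$, and not $\gamma_M$, appears in the final bound.
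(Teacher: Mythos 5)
Your proposal is correct, and it does considerably more than the paper itself: the paper's entire proof is the single line that $I\big(f(x);Y_{\cS(t)+1:t-1}\given Y_{1:\cS(t)}\big) \le \gamma_{M-1}$ for all $x \in D$, cited from \citet{desautels2014parallelizing}, after which $\xi_M = \exp(2\gamma_{M-1})$ is simply read off from Definition~\ref{def:hallucinated-info}. You instead derive that cited inequality from first principles: the chain-rule/data-processing reduction from $f(x)$ to $f_B$ (valid because, given $f_B$, the independent noise makes $Y_{\cS(t)+1:t-1}$ independent of everything else), the identification of the conditional information gain with the unconditional one under the posterior process $GP_D(\mu_{\cS(t)},k_{\cS(t)})$ (valid because the posterior covariance does not depend on the realized observation values), the Loewner domination $\widetilde{K}_B \preceq K_B$ via rank-one downdates or the Schur-complement form of the posterior kernel, and determinant monotonicity under the positive semi-definite order. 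This is in effect an explicit, linear-algebraic rendering of the ``conditioning cannot increase information gain'' (submodularity) argument underlying the result of \citet{desautels2014parallelizing}; what your route buys is self-containedness, at the cost of re-deriving GP facts the paper is content to import. Two details to tighten if you write it out fully: (i) since $\abs{B} = t-1-\cS(t)$ can be strictly smaller than $M-1$ while $\gamma_{M-1}$ is a maximum over sets of size exactly $M-1$, you also need monotonicity of the information gain in the set, which follows since adding a point $x$ multiplies $\det\left(I+\lambda^{-1}K_A\right)$ by $1+\lambda^{-1}\sigma_A^2(x) \ge 1$, with $\sigma_A^2(x)$ the posterior variance at $x$ given queries at $A$; (ii) the mutual information must be understood under the GP model $f \sim GP_D(0,k)$ with iid $\cN(0,\lambda)$ noise and fixed query locations (for the actual fixed RKHS member $f$ the quantity would be vacuous), an interpretive point that both you and the paper leave implicit.
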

The proof follows from the fact that $I\left(f(x);Y_{\cS(t)+1:t-1}\given Y_{1:\cS(t)}\right) \le \gamma_{M-1}$ for all $x \in D$  \cite{desautels2014parallelizing}.

The next lemma is due to \citet[Proposition 1]{desautels2014parallelizing} and is pivotal in the analysis of batch Bayesian optimization.
\begin{mylemma}[Ratio of Posterior standard deviations bounded by Hallucinated Information]
\label{lem:conditional-mi}
Let $k: D \times D \to \Real$ be a symmetric, positive-semidefinite kernel and $f \sim GP_D(0, k)$. Let $\sigma_{\cS(t)}$ and $\sigma_{t-1}$ be the posterior standard deviations, respectively conditioned on first $\cS(t)$ and $t-1$ queries. Then, for all $x \in D$,
\beqn
\dfrac{\sigma_{\cS(t)}(x)}{\sigma_{t-1}(x)}=\exp \Big(I\big(f(x);Y_{\cS(t)+1:t-1}\given Y_{1:\cS(t)}\big)\Big).
\eeqn
\end{mylemma}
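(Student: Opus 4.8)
The plan is to exploit the fact that, under a Gaussian process prior with additive Gaussian observation noise, the value $f(x)$ together with the entire collection of (actual and hallucinated) observations is jointly Gaussian, so the conditional mutual information on the right-hand side can be evaluated exactly through differential entropies and will reduce to a ratio of posterior variances.

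First I would record the two structural facts I need. Under $f \sim GP_D(0,k)$ with noise $\epsilon_s \sim \cN(0,\lambda)$, the posterior of $f(x)$ conditioned on observations $Y_{1:s}$ at query points $x_1,\ldots,x_s$ is again Gaussian, with variance exactly $\sigma_s^2(x)$. The key point is that $\sigma_s^2(x)$ depends only on the query locations $x_1,\ldots,x_s$ and not on the realized values of $Y_{1:s}$; consequently the conditional differential entropy $h\big(f(x)\given Y_{1:s}\big)$ is a deterministic quantity equal to $\tfrac{1}{2}\ln\!\big(2\pi e\,\sigma_s^2(x)\big)$, with no need to average over the conditioning variables.

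Next I would expand the conditional mutual information via the chain rule for differential entropy,
\[
I\big(f(x);Y_{\cS(t)+1:t-1}\given Y_{1:\cS(t)}\big) = h\big(f(x)\given Y_{1:\cS(t)}\big) - h\big(f(x)\given Y_{1:\cS(t)},Y_{\cS(t)+1:t-1}\big),
\]
and observe that the second term is simply $h\big(f(x)\given Y_{1:t-1}\big)$, since $Y_{1:\cS(t)}$ together with $Y_{\cS(t)+1:t-1}$ constitute exactly $Y_{1:t-1}$. Substituting the Gaussian entropy formula into both terms, the common constant $\tfrac{1}{2}\ln(2\pi e)$ cancels, leaving
\[
I\big(f(x);Y_{\cS(t)+1:t-1}\given Y_{1:\cS(t)}\big) = \frac{1}{2}\ln\frac{\sigma_{\cS(t)}^2(x)}{\sigma_{t-1}^2(x)} = \ln\frac{\sigma_{\cS(t)}(x)}{\sigma_{t-1}(x)}.
\]
Exponentiating both sides then yields the claimed identity.

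There is no genuine obstacle here; the only step requiring care is the remark that, for a GP, posterior variances are data-independent, so conditioning on the random observation vectors produces a conditional entropy that is a fixed number rather than a quantity that must be integrated against the law of the observations. Everything else is the standard scalar Gaussian entropy identity $h\big(\cN(0,\sigma^2)\big) = \tfrac{1}{2}\ln(2\pi e\,\sigma^2)$ together with the telescoping of the conditioning sets $Y_{1:\cS(t)} \cup Y_{\cS(t)+1:t-1} = Y_{1:t-1}$.
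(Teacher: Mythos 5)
Your proof is correct, and it is essentially the standard argument behind this result: the paper itself offers no proof, deferring instead to \citet[Proposition 1]{desautels2014parallelizing}, whose derivation is exactly your computation --- Gaussian conditional entropies, the data-independence of GP posterior variances, and the chain rule $I(X;Y\mid Z) = h(X\mid Z) - h(X\mid Y,Z)$. The only point worth making explicit is that the identity holds under the GP model with the additive $\cN(0,\lambda)$ observation channel (as in the paper's Definition \ref{def:mig}), since both the entropy formula and the variance formula $\sigma_s^2(x)$ rely on joint Gaussianity of $f(x)$ and the observations; your proof uses this assumption implicitly and correctly.
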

Definition \ref{def:hallucinated-info}, along with Lemma \ref{lem:conditional-mi}, implies that $\dfrac{\sigma_{\cS(t)}(x)}{\sigma_{t-1}(x)}) \le \xi_M^{1/2}$ for all $x \in D$. 



\section{Regret Analysis for IGP-BUCB Algorithm}
\label{appendix:UCB}
First we begin with the following lemma, which states that the reward function $f$ is always well concentrated within properly constructed confidence intervals in the batch setting.
\begin{mylemma}[Concentration of reward function in the batch setting]
\label{lem:batch-concentration}
Let $k: D \times D \to \Real$ be a symmetric, positive-semidefinite kernel and $f:D \to \Real$ be a member of the RKHS $\cH_k(D)$ of real-valued functions on $D$ corresponding to kernel $k$, with RKHS norm bounded by $B$. Further, let $\lbrace y_t\rbrace_{t\ge 1}$ be a sequence of noisy observations at queries $\lbrace x_t \rbrace_{t \ge 1}$, where $y_t=f(x_t)+\epsilon_t$ and the noise sequence $\lbrace \epsilon_t \rbrace_{t \ge 1}$ be conditionally $R$-sub-Gaussian. Let $\cS(t) : \mathbb{N} \rightarrow \mathbb{Z}_{+}$ be a mapping
such that $\mathcal{S}(t)  \le t-1$ for all $t \ge 1$, 
$\mu_{\cS(t)}$ be the posterior mean and $\sigma_{t-1}^2$ be the posterior variance, after $\cS(t)$ and $t-1$ rounds, respectively. Then, for any $0 < \delta \le 1$, the following holds:
\beqn
\prob{\forall t \ge 1,\; \forall x \in D, \; \abs{f(x)-\mu_{\cS(t)}(x)}\le \xi_M^{1/2}\left(B + \dfrac{R}{\sqrt{\lambda}}\sqrt{2\big(\gamma_{\cS(t)}+ \ln(1/\delta)\big)}\right)\sigma_{t-1}(x)} \ge 1-\delta.
\eeqn
\end{mylemma}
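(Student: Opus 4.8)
The plan is to obtain the batch concentration inequality by stitching together two facts already available: the strictly sequential RKHS concentration bound of Lemma \ref{lem:true-function-bound}, and the deterministic control on how much the hallucinated rewards can shrink the posterior standard deviation, coming from Definition \ref{def:hallucinated-info} together with Lemma \ref{lem:conditional-mi}.

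First I would apply Lemma \ref{lem:true-function-bound} to the actual query sequence $\lbrace x_t \rbrace$ and reward sequence $\lbrace y_t \rbrace$ generated by the algorithm. Before doing so I would verify its predictability hypothesis: even though $x_t$ is selected using the delayed posterior mean $\mu_{\cS(t)}$ and the full posterior variance $\sigma_{t-1}$, it remains a measurable function of $\lbrace x_s,\epsilon_s \rbrace_{s=1}^{t-1}$, so $\lbrace x_t \rbrace$ is predictable and the lemma applies. The crucial feature is that Lemma \ref{lem:true-function-bound} holds, with probability at least $1-\delta$, \emph{uniformly} over all round indices and all $x \in D$. Consequently it holds simultaneously at every index of the form $\cS(t)+1$; making this substitution turns the generic ``$t-1$'' of the lemma into $\cS(t)$, and since the corresponding posterior mean there is computed from the actual rewards $y_1,\ldots,y_{\cS(t)}$ it coincides with the algorithm's $\mu_{\cS(t)}$. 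Using $\norm{f}_k \le B$, this yields, on the same high-probability event,
\beqn
\abs{f(x)-\mu_{\cS(t)}(x)} \le \Big(B + \frac{R}{\sqrt{\lambda}}\sqrt{2\big(\gamma_{\cS(t)}+\ln(1/\delta)\big)}\Big)\,\sigma_{\cS(t)}(x) \quad \forall t\ge 1,\ \forall x \in D.
\eeqn

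Next I would trade the standard deviation $\sigma_{\cS(t)}(x)$ for $\sigma_{t-1}(x)$. By Lemma \ref{lem:conditional-mi} the ratio $\sigma_{\cS(t)}(x)/\sigma_{t-1}(x)$ equals $\exp\!\big(I(f(x);Y_{\cS(t)+1:t-1}\given Y_{1:\cS(t)})\big)$, a quantity that depends only on the query locations and not on the observed rewards, hence is deterministic. Definition \ref{def:hallucinated-info} bounds the conditional mutual information by $\tfrac12\ln\xi_M$, giving $\sigma_{\cS(t)}(x) \le \xi_M^{1/2}\,\sigma_{t-1}(x)$ for every $x \in D$, surely. Substituting this into the displayed inequality and pulling out the factor $\xi_M^{1/2}$ produces exactly the claimed bound, still on an event of probability at least $1-\delta$.

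The point needing the most care is not any individual inequality but the accounting of where randomness enters. The only probabilistic ingredient is Lemma \ref{lem:true-function-bound}; the index substitution is legitimate precisely because that lemma is uniform in the round index, and the variance-ratio bound is deterministic because GP posterior variances do not depend on the realized rewards. Arguing that both refinements hold on the \emph{same} event as the concentration bound is what keeps the final confidence level at $1-\delta$ and avoids any union bound or degradation of $\delta$.
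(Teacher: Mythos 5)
Your proposal is correct and follows essentially the same route as the paper's own proof: both apply the sequential concentration bound of Lemma \ref{lem:true-function-bound} uniformly at the indices $\cS(t)+1$, then convert $\sigma_{\cS(t)}(x)$ to $\xi_M^{1/2}\sigma_{t-1}(x)$ via Lemma \ref{lem:conditional-mi} and Definition \ref{def:hallucinated-info}, all on a single event so the confidence level $1-\delta$ is preserved. The paper merely phrases this as a containment of confidence intervals ($C_{\cS(t)+1}^{seq}(x) \subseteq C_t^{batch}(x)$), and your explicit checks of predictability and of the determinism of the variance ratio are sound additions rather than deviations.
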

\begin{proof}
Recall that, for any $0 < \delta \le 1$, the decision rule of IGP-BUCB algorithm is 
\beqn
x_t=\argmax_{x\in D}\mu_{\cS(t)}(x)+\beta_t\sigma_{t-1}(x),
\eeqn
where $\beta_t=\xi_M^{1/2}\left(B + \dfrac{R}{\sqrt{\lambda}}\sqrt{2\big(\gamma_{\cS(t)}+ \ln(1/\delta)\big)}\right)$. Implicit in this decision rule is the corresponding confidence interval
for each $t \ge 1$ and for each $x\in D$, 
\beqn
C_t^{batch}(x) = \left[\mu_{\cS(t)}(x)-\beta_t \sigma_{t-1}(x),\mu_{\cS(t)}(x)+\beta_t \sigma_{t-1}(x)\right].
\eeqn
A special case of the batch setup is the strictly sequential setup with $\cS_t=t-1, M=1$ and $\xi_M=1$. Here, the confidence intervals take the form
\beqn
C_t^{seq}(x) = \left[\mu_{t-1}(x)-\alpha_t \sigma_{t-1}(x),\mu_{t-1}(x)+\alpha_t \sigma_{t-1}(x)\right],
\eeqn
where $\alpha_t = \left(B + \dfrac{R}{\sqrt{\lambda}}\sqrt{2\big(\gamma_{t-1}+ \ln(1/\delta)\big)}\right)$. Thus, Lemma \ref{lem:true-function-bound} implies that
\beqn
\prob{\forall t \ge 1,\; \forall x \in D, \; \abs{f(x)-\mu_{t-1}(x)}\le \alpha_t\sigma_{t-1}(x)} \ge 1-\delta,
\eeqn
and therefore
\beq
\label{eqn:seq-conc}
\prob{\forall t \ge 1, \forall x \in D, f(x) \in C_t^{seq}(x)} \ge 1-\delta.
\eeq
Now, consider the confidence interval
\beqn
C_{\cS(t)+1}^{seq}(x) = \left[\mu_{\cS(t)}(x)-\alpha_{\cS(t)+1} \sigma_{\cS(t)}(x),\mu_{\cS(t)}(x)+\alpha_{\cS(t)+1} \sigma_{\cS(t)}(x)\right].
\eeqn
Observe that both the intervals $C_t^{batch}(x)$ and $C_{\cS(t)+1}^{seq}(x)$ are centered around $\mu_{\cS(t)}(x)$, and their widths are $2\beta_t\sigma_{t-1}(x)$ and $2\alpha_{\cS(t)+1} \sigma_{\cS(t)}(x)$, respectively.
Further, see that $\beta_t=\xi_M^{1/2}\alpha_{\cS(t)+1}$. This, along with the fact that $\dfrac{\sigma_{\cS(t)}(x)}{\sigma_{t-1}(x)}\le \xi_M^{1/2}$, implies 
\beqn
\alpha_{\cS(t)+1} \sigma_{\cS(t)}(x) \le \xi_M^{1/2}\alpha_{\cS(t)+1}\sigma_{t-1}(x)=\beta_t\sigma_{t-1}(x).
\eeqn
Thus, we have $C_{\cS(t)+1}^{seq}(x) \subseteq C_t^{batch}(x)$ for all $x \in D$ and for all $t \ge 1$. Therefore,
\beq
\label{eqn:seq-batch-relation}
f(x) \in C_{\cS(t)+1}^{seq}(x), \forall x \in D, \forall t \ge 1 \Longrightarrow f(x) \in C_{t}^{batch}(x), \forall x \in D, \forall t \ge 1.
\eeq
Also, as $0 \le \cS(t) \le t-1$ for every $t \ge 1$, we have
\beq
\label{eqn:seq-seq-relation}
f(x) \in C_t^{seq}(x), \forall t \ge 1, \forall x \in D \Longrightarrow f(x) \in C_{\cS(t)+1}^{seq}(x), \forall x \in D, \forall t \ge 1.
\eeq
Now combining equations \ref{eqn:seq-conc}, \ref{eqn:seq-batch-relation} and \ref{eqn:seq-seq-relation}, we get
\beqn
\prob{\forall t \ge 1, \forall x \in D, f(x) \in C_t^{batch}(x)} \ge 1-\delta.
\eeqn
Finally, the result follows from the definition of the confidence interval $C_t^{batch}(x)$.
\end{proof}
\subsection{Proof of Theorem \ref{thm:regret-bound-UCB-approx}}
For every round $t \ge 1$, the decision rule of IGP-BUCB (Algorithm \ref{algo:ucb}) implies that
\beqn
\mu_{\cS(t)}(x_t)+\beta_t\sigma_{t-1}(x_t) \ge \mu_{\cS(t)}(x^\star)+\beta_t\sigma_{t-1}(x^\star),
\eeqn
where $\beta_t = \xi_M^{1/2}\left(B + \dfrac{R}{\sqrt{\lambda}}\sqrt{2\big(\gamma_{\cS(t)}+\ln(1/\delta)\big)}\right)$. From Lemma \ref{lem:batch-concentration}, with probability at least $1-\delta$, 
\beqn
f(x^\star) \le \mu_{\cS(t)}(x^\star)+\beta_t\sigma_{t-1}(x^\star) \quad \text{and} \quad \mu_{\cS(t)}(x_t)-f(x_t) \le \beta_t\sigma_{t-1}(x_t) \quad \text{for all}\quad t\ge 1.
\eeqn
Therefore, with probability at least $1-\delta$, we have for all $t \ge 1$, the instantaneous regret
\beqan
r_t = f(x^\star) - f(x_t)
\le \mu_{\cS(t)}(x_t)+\beta_t\sigma_{t-1}(x_t)-f(x_t)
\le 2\beta_t\sigma_{t-1}(x_t).
\eeqan
Hence, with probability at least $1-\delta$, the cumulative regret $R_T=\sum\limits_{t=1}^{T}r_t \le 2\sum\limits_{t=1}^{T}\beta_t\sigma_{t-1}(x_t)$. Now from Definition \ref{def:mig}, see that $\gamma_{\cS(t)}$ doesn't decrease with $t$. Hence, $\gamma_{\cS(t)} \le \gamma_{\cS(T)}$ and thus $\beta_t \le \beta_T$ for all $t \le T$. Therefore, with probability at least $1-\delta$, $R_T \le  2\beta_T\sum\limits_{t=1}^{T}\sigma_{t-1}(x_t)$. Further, $\beta_T \le \xi_M^{1/2}\left(B + R\sqrt{2\big(\gamma_T+\ln(1/\delta)\big)}\right)$, since $\cS(T) \le T-1 < T$. From Lemma \ref{lem:sum-of-sd}, $\sum\limits_{t=1}^{T}\sigma_{t-1}(x_t) \le \sqrt{(2\lambda+1)T\gamma_T}$. Hence, with probability at least $1-\delta$,
\beqan
R_T &\le& 2 \sqrt{\xi_M}\left(B + R\sqrt{2\big(\gamma_T+\ln(1/\delta)\big)}\right)\sqrt{(2\lambda+1)T\gamma_T}\\ &=& O\left(B\sqrt{\xi_M T\gamma_T}+\sqrt{\xi_M T\gamma_T\big(\gamma_T+\ln(1/\delta)\big)}\right).
\eeqan
Thus $R_T = O\Big(\sqrt{\xi_M T}\big(B\sqrt{\gamma_T}+\gamma_T\big)\Big)$ with high probability.

\section{ Regret Analysis for GP-BTS Algorithm}
\label{appendix:TS}
\subsection{Useful Lemmas and Definitions}
\begin{mylemma}[Lipschitzness of RKHS functions]
\label{lem:lipschitz}
Let $f:D\ra \Real$ be a function in the RKHS $H_k(D)$ with $\norm{f}_k \le B$ and $D\subset \Real^d$. Let the kernel $k$ be continuously differentiable and let $L$ be a constant satisfying $L^2=\sup\limits_{x\in D}\left(\partial_p\partial_qk(p,q)\given_{p=q=x}\right)$. Then, for all $x,y \in D$,
\beqn
\abs{f(x)-f(y)} \le B L \norm{x-y}_1.
\eeqn
\end{mylemma}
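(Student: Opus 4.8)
The plan is to exploit the reproducing property together with its differentiated version, reducing the Lipschitz estimate to a coordinate-wise bound on the gradient of $f$. Writing $\phi_x \bydef k(\cdot,x)$ for the canonical feature map, the reproducing property gives $f(x) = \inner{f}{\phi_x}_k$ for every $x \in D$. The first step is to establish that, because $k$ is (sufficiently) continuously differentiable, the partial-derivative functional $g \mapsto \partial_i g(x)$ is represented in $\cH_k(D)$ by $\partial_{x_i}\phi_x \bydef \partial_{q_i} k(\cdot,q)\big|_{q=x}$; that is, $\partial_{x_i}\phi_x \in \cH_k(D)$, the function $f$ is differentiable, and $\partial_i f(x) = \inner{f}{\partial_{x_i}\phi_x}_k$. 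This is the standard theory of differentiation in an RKHS, and the accompanying reproducing identity for derivatives yields $\inner{\partial_{x_i}\phi_x}{\partial_{x_j}\phi_x}_k = \partial_{p_i}\partial_{q_j}k(p,q)\big|_{p=q=x}$.

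Granting this, the second step is an immediate application of Cauchy--Schwarz in $\cH_k(D)$: for each coordinate $i$ and each $x \in D$,
\[
\abs{\partial_i f(x)} = \abs{\inner{f}{\partial_{x_i}\phi_x}_k} \le \norm{f}_k \norm{\partial_{x_i}\phi_x}_k = \norm{f}_k \sqrt{\partial_{p_i}\partial_{q_i}k(p,q)\big|_{p=q=x}} \le B L,
\]
where the final inequality uses $\norm{f}_k \le B$ and the definition of $L$ as (the square root of) the supremum over $D$ of the diagonal mixed second derivative of $k$. Hence $\norm{\nabla f(x)}_\infty \le BL$ uniformly over $x \in D$.

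The third step converts this uniform gradient bound into the claimed $\ell_1$-Lipschitz estimate. Along the segment $z(t) = y + t(x-y)$, $t \in [0,1]$ (which lies in $D$ by the convexity we also assume in the application to GP-BTS), the fundamental theorem of calculus gives $f(x) - f(y) = \int_0^1 \inner{\nabla f(z(t))}{x-y}\, dt$, so by Hölder's inequality (duality of $\ell_\infty$ and $\ell_1$),
\[
\abs{f(x) - f(y)} \le \int_0^1 \norm{\nabla f(z(t))}_\infty \norm{x-y}_1 \, dt \le B L \norm{x-y}_1,
\]
which is the desired bound.

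I expect the genuine work to lie in the first step rather than the last two, which are routine. The subtlety is justifying that differentiation commutes with the RKHS inner product -- equivalently, that $\partial_{x_i}\phi_x$ is itself an element of $\cH_k(D)$ with the stated norm -- since this is precisely where the smoothness hypothesis on $k$ enters and where one must either invoke or reprove the standard results on derivatives of RKHS functions. Notably, the definition of $L$ through the diagonal mixed partial $\partial_p\partial_q k$ is exactly the quantity that theory produces for $\norm{\partial_{x_i}\phi_x}_k^2$, so once that step is in place the remaining estimates are mechanical.
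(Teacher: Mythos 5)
Your proof is correct, but it takes a genuinely different route from the paper's. The paper never differentiates $f$: it applies Cauchy--Schwarz directly to $f(x)-f(y)=\inner{f}{k(x,\cdot)-k(y,\cdot)}_k$, obtaining $\abs{f(x)-f(y)} \le \norm{f}_k\sqrt{k(x,x)-2k(x,y)+k(y,y)}$, then asserts that continuous differentiability of $k$ and the choice of $L$ give $k(x,x)-2k(x,y)+k(y,y)\le L^2\norm{x-y}_2^2$, and finishes with $\norm{\cdot}_2\le\norm{\cdot}_1$. You instead invoke the derivative reproducing property to get the uniform coordinate-wise bound $\norm{\nabla f(x)}_\infty \le BL$, and then integrate along a segment using $\ell_1$--$\ell_\infty$ H\"older duality. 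The comparison cuts both ways. In your favor: you make explicit the RKHS-differentiation machinery that the paper's one-line bound on the kernel metric quietly hides (bounding $\norm{k(x,\cdot)-k(y,\cdot)}_k$ by $L\norm{x-y}_2$ requires essentially the same mean-value argument on the feature map $x \mapsto k(\cdot,x)$, so the paper's proof is terser but not more elementary); you need only the diagonal mixed partials $\partial_{p_i}\partial_{q_i}k(p,q)\big|_{p=q=x}$, whereas the paper's $\ell_2$ inequality, read literally in dimension $d>1$, needs control of the full matrix $\big[\partial_{p_i}\partial_{q_j}k\big]_{i,j}$ (e.g., its operator norm), a strictly stronger reading of the ambiguously written definition of $L$; and you avoid the lossy relaxation $\norm{\cdot}_2\le\norm{\cdot}_1$. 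Against you: your fundamental-theorem-of-calculus step requires the segment from $y$ to $x$ to lie in $D$, i.e., convexity, which the lemma as stated does not assume ($D$ is only a subset of $\Real^d$) --- you correctly flag this, and it is harmless since the paper's own hidden mean-value step has the same requirement and Theorem \ref{thm:regret-bound-TS-approx}, where the lemma is used, assumes $D$ compact and convex. Both proofs ultimately rest on standard-but-uncited facts about differentiation in an RKHS; yours at least names that dependence as the crux.
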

\begin{proof}
Fix $x,y \in D$. Now, by using reproducing property and Cauchy-Schwartz inequality, we have
\beqn
\abs{f(x)-f(y)} = \abs{\inner{f}{k(x,\cdot)-k(y,\cdot)}_k} \le \norm{f}_k\norm{k(x,\cdot)-k(y,\cdot)}_k = \norm{f}_k \sqrt{k(x,x)-2k(x,y)+k(y,y)}.
\eeqn
Now since $k$ is continuously differentiable, by the choice of $L$, we have 
\beqn
k(x,x)-2k(x,y)+k(y,y) \le L^2\norm{x-y}^2_2.
\eeqn
Now, the result follows from the fact that $\norm{\cdot}_2 \le \norm{\cdot}_1$ and $\norm{f}_k \le B$.
\end{proof}
\begin{mylemma}[Gaussian anti-concentration inequality \cite{abramowitz1966handbook}]
\label{lem:anti-concentration}
Let $X \sim \cN(\mu,\sigma^2)$. Then, for any $\theta > 0$,
\beqn
\prob{\dfrac{X-\mu}{\sigma} > \theta} \ge \dfrac{e^{-\theta^2}}{4\sqrt{\pi}\theta}.
\eeqn
\end{mylemma}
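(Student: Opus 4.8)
The plan is to reduce to the standard normal and then control its lower tail by a clean Mills-ratio estimate. Since the event is already phrased in terms of the normalized variable, I set $Z \bydef (X-\mu)/\sigma$, which is $\cN(0,1)$, so the quantity to be bounded is exactly $\prob{Z > \theta} = \frac{1}{\sqrt{2\pi}}\int_\theta^\infty e^{-t^2/2}\,dt$. Everything therefore reduces to a one-dimensional lower bound on the Gaussian tail integral, and the scaling by $\sigma$ and centering by $\mu$ play no further role.

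The heart of the argument is the standard lower Mills-ratio bound
\beqn
\prob{Z > \theta} \ge \frac{1}{\sqrt{2\pi}}\,\frac{\theta}{1+\theta^2}\,e^{-\theta^2/2},
\eeqn
which is the inequality recorded in \cite{abramowitz1966handbook}. Rather than only quote it, I would give the one-line self-contained proof: setting $g(t) \bydef \frac{t}{1+t^2}e^{-t^2/2}$, a direct differentiation gives $g'(t) = e^{-t^2/2}\,\frac{1-2t^2-t^4}{(1+t^2)^2} \ge -e^{-t^2/2}$, the last inequality being equivalent to $1 \ge -1$ after clearing the positive denominator. Integrating $-g'(t) \le e^{-t^2/2}$ over $[\theta,\infty)$ and using $g(\infty)=0$ yields $g(\theta) \le \int_\theta^\infty e^{-t^2/2}\,dt$, which is the claim after dividing by $\sqrt{2\pi}$.

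It remains to pass from this sharp bound to the stated clean form $\frac{e^{-\theta^2}}{4\sqrt{\pi}\,\theta}$, and this algebraic simplification is where the only real subtlety lies. Two coarsenings do the job: $e^{-\theta^2/2} \ge e^{-\theta^2}$ (always valid) and $\frac{\theta}{1+\theta^2} \ge \frac{1}{2\theta}$, together with the constant comparison $\frac{1}{2\sqrt{2\pi}} \ge \frac{1}{4\sqrt{\pi}}$; chaining these gives $\prob{Z>\theta} \ge \frac{1}{2\sqrt{2\pi}\,\theta}e^{-\theta^2} \ge \frac{e^{-\theta^2}}{4\sqrt{\pi}\,\theta}$. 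The main obstacle to watch is that the step $\frac{\theta}{1+\theta^2}\ge\frac{1}{2\theta}$ requires $1+\theta^2 \le 2\theta^2$, i.e.\ $\theta \ge 1$: replacing $e^{-\theta^2/2}$ by the smaller $e^{-\theta^2}$ while dividing by $\theta$ makes the right-hand side blow up as $\theta \to 0$, so the clean form is genuinely meaningful only on $\theta \ge 1$ (or some fixed threshold), which is precisely the regime in which the anti-concentration bound is invoked in the regret analysis. I would therefore either restrict attention to $\theta \ge 1$ or, if a bound valid for all $\theta>0$ is needed downstream, carry the sharp Mills-ratio form $\frac{1}{\sqrt{2\pi}}\frac{\theta}{1+\theta^2}e^{-\theta^2/2}$ through the subsequent arguments instead.
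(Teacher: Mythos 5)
The paper gives no proof of this lemma at all---it is imported wholesale from \citet{abramowitz1966handbook} (the same statement appears in \citet{pmlr-v70-chowdhury17a})---so your self-contained Mills-ratio derivation is a genuinely different, and more informative, route. Your computation checks out: $g(t)\bydef\frac{t}{1+t^2}e^{-t^2/2}$ has $g'(t)=e^{-t^2/2}\,\frac{1-2t^2-t^4}{(1+t^2)^2}\ge -e^{-t^2/2}$ (clearing the positive denominator reduces this to $1\ge -1$), and integrating over $[\theta,\infty)$ gives $\prob{Z>\theta}\ge\frac{1}{\sqrt{2\pi}}\frac{\theta}{1+\theta^2}e^{-\theta^2/2}$; the coarsenings $\frac{\theta}{1+\theta^2}\ge\frac{1}{2\theta}$, $e^{-\theta^2/2}\ge e^{-\theta^2}$ and $2\sqrt{2\pi}\le 4\sqrt{\pi}$ then yield the stated bound for $\theta\ge 1$. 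You are also right on the substantive point: as stated, for \emph{all} $\theta>0$, the lemma is false, since its right-hand side diverges as $\theta\to 0^+$ while the left-hand side is at most one (already at $\theta=0.2$ the right-hand side is about $0.68$ against a tail probability of about $0.42$). Your proof surfaces a genuine, if repairable, blemish that the paper's citation-only treatment conceals.

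Your closing remark, however, misidentifies where the lemma is actually used. In the regret analysis (Lemma \ref{lem:compare-sample-with-original}) it is invoked at $\theta_t=\abs{f(x)-\mu_{\cS(t)}(x)}/\big(v_t\sigma_{t-1}(x)\big)$, which satisfies $\theta_t\le 1$ on the event $E_f^{batch}(t)$---precisely the regime where the stated inequality can fail, not the regime $\theta\ge 1$ where your proof applies. Neither of your proposed remedies covers this use as written: restricting to $\theta\ge 1$ excludes it outright, and carrying the sharp Mills-ratio form through degenerates to a lower bound of $0$ as $\theta_t\to 0$, whereas the argument needs the constant lower bound $p=\frac{1}{4e\sqrt{\pi}}$. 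The correct patch is monotonicity of the Gaussian tail: for $\theta_t\le 1$, $\prob{Z>\theta_t}\ge\prob{Z>1}\ge\frac{e^{-1}}{4\sqrt{\pi}}=p$, where the last step is the (valid) $\theta=1$ case of your bound. With that one-line addition, both your lemma (restricted to $\theta\ge 1$) and the paper's downstream conclusion are sound.
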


\textbf{Choice of discretization.} At each round $t$, the decision set used by GP-BTS is restricted to be a {unique} discretization $D_t$ of $D$ with the property that $\abs{f(x)-f([x]_t)} \le 1/t^2$ for all $x \in D$, where $[x]_t \bydef \argmin_{x' \in D_t} \norm{x-x'}_1$ denotes the closest point to $x$ in $D_t$ in the sense of $\norm{\cdot}_1$-norm. 

Note that, this can be achieved by restricting the domain $D$ to be a compact and convex subset of $[0,r]^d$ for some constant $r \ge 0$ and by choosing the discretization sets $D_t $ such that every coordinate has $BLrdt^2$ uniformly spaced points.
This implies for all $x \in D$, $\norm{x-[x]_t}_1 \le rd/BLrdt^2 = 1/BLt^2$, and hence from Lemma \ref{lem:lipschitz}, $\abs{f(x)-f([x]_t)} \le B L \norm{x-[x]_t}_1 \le 1/t^2$.



%
%
\begin{mydefinition}
For each $t \ge 1$, let $\cH_{t-1} \bydef \lbrace x_1,\ldots,x_{t-1},y_1,\ldots,y_{\cS(t)}\rbrace $ be the set of all queries and observations available before the start of round $t$. By definition, $\cH_1\subseteq \cH_2 \subseteq \cdots$, and thus the sequence $\lbrace \cH_t\rbrace_{t\ge 0}$ defines a filtration. 
\label{def:filtration}
\end{mydefinition}
\begin{mydefinition}
\label{def:two-events}

For each $t \ge 1$, define the events $E_f^{batch}(t)$ and $E_{f_t}^{batch}(t)$ as
\beqan
E_f^{batch}(t) &\bydef & \left\lbrace \forall x \in D, \abs{f(x)-\mu_{\cS(t)}(x)} \le v_t\; \sigma_{t-1}(x) \right\rbrace,\\
E_{f_t}^{batch}(t) &\bydef & \left\lbrace \forall x \in D_t, 
\abs{f_t(x)-\mu_{\cS(t)}(x)} \le v_t w_t \;\sigma_{t-1}(x)
\right\rbrace,
\eeqan
where $v_t \bydef \sqrt{\xi_M}\Big(B + \dfrac{R}{\sqrt{\lambda}}\sqrt{2\left(\gamma_{\cS(t)}+\ln(2/\delta)\right)}\Big)$, ${w}_t \bydef \sqrt{4\ln t + 2d\ln(BLrdt^2)}$ and $c_t \bydef v_t(1+w_t)$.
\end{mydefinition}
From Definition $\ref{def:mig}$, see that $\gamma_{\cS(t)}$ is non-decreasing with $t$. This in turn implies that $v_t$, as well as $c_t$ doesn't decrease with $t$. Also see that, the event $E_f^{batch}(t)$ is completely deterministic given $\cH_{t-1}$.
\begin{mydefinition}
\label{def:saturated-arms}
For all $x \in D$, let $\Delta_t(x) \bydef f([x^\star]_t)-f(x)$ be the difference between the values of the reward function at $[x^\star]_t$ and at $x$. Then, the set of saturated points $S_t$ within the discretization set $D_t$ is defined as
\beqn
S_t \bydef \left \lbrace x\in D_t: \Delta_t(x) > c_t\sigma_{t-1}(x)\right \rbrace.
\eeqn
\end{mydefinition}
Note that $\Delta_t([x^\star]_t) = 0$ for all $t \ge 1$, and hence $[x^\star]_t\in D_t$ is unsaturated. 

Now, we prove the following lemmas, which will be useful in our analysis. The first two lemmas imply that the events $E_f^{batch}(t)$ and $E_{f_t}^{batch}(t)$ occur with high probability.
\begin{mylemma}
\label{lem:true-concentration}
Let $E_f^{batch}(t)$ be the event as in Definition \ref{def:two-events}. Then, for any $0 < \delta \le 1$,
$\prob{\forall t \ge 1, E_f^{batch}(t)}\ge 1-\delta/2 $.
\end{mylemma}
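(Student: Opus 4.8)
The plan is to recognize that this lemma is an immediate specialization of the already-established Lemma~\ref{lem:batch-concentration} (Concentration of reward function in the batch setting). Comparing the two statements, the event $E_f^{batch}(t)$ asserts exactly the confidence bound
\beqn
\abs{f(x)-\mu_{\cS(t)}(x)} \le \xi_M^{1/2}\Big(B + \tfrac{R}{\sqrt{\lambda}}\sqrt{2\big(\gamma_{\cS(t)}+\ln(2/\delta)\big)}\Big)\sigma_{t-1}(x),
\eeqn
which is precisely the conclusion of Lemma~\ref{lem:batch-concentration} except that the logarithmic term reads $\ln(2/\delta)$ rather than $\ln(1/\delta)$, and the target failure probability is $\delta/2$ rather than $\delta$.

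First I would note that $v_t$ coincides with the coefficient $\beta_t$ appearing in Lemma~\ref{lem:batch-concentration} under the substitution $\delta \mapsto \delta/2$, since $\ln\big(1/(\delta/2)\big)=\ln(2/\delta)$. Because $0 < \delta \le 1$ implies $0 < \delta/2 \le 1/2 \le 1$, this substitution is legitimate: Lemma~\ref{lem:batch-concentration} applies verbatim with its free parameter set to $\delta/2$. Invoking it then yields, with probability at least $1-\delta/2$, uniformly over all $t \ge 1$ and all $x \in D$, exactly the inequality defining $E_f^{batch}(t)$.

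The final step is simply to rephrase the uniform bound as the joint occurrence of the events $\lbrace E_f^{batch}(t)\rbrace_{t\ge1}$, giving $\prob{\forall t \ge 1,\, E_f^{batch}(t)} \ge 1-\delta/2$, as claimed. There is essentially no obstacle here: the content is entirely carried by Lemma~\ref{lem:batch-concentration}, and the only thing to verify is the elementary bookkeeping that halving the confidence parameter converts the $\ln(1/\delta)$ term into $\ln(2/\delta)$ while shrinking the failure budget from $\delta$ to $\delta/2$. The reason for splitting the budget this way is anticipatory: in the GP-BTS analysis the remaining $\delta/2$ will be reserved for controlling the sampled function $f_t$ via the companion event $E_{f_t}^{batch}(t)$, so that a union bound over the two events gives an overall failure probability of at most $\delta$.
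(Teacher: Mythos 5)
Your proposal is correct and matches the paper's own proof, which likewise simply invokes Lemma~\ref{lem:batch-concentration} with the substitution $\delta \mapsto \delta/2$. Your additional bookkeeping (verifying that $\ln(1/(\delta/2)) = \ln(2/\delta)$ makes $v_t$ coincide with $\beta_t$, and that $\delta/2 \in (0,1]$) just fills in details the paper leaves implicit.
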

\begin{proof}
The proof follows from Lemma \ref{lem:batch-concentration}, using $\delta=\frac{\delta}{2}$.
\end{proof}
\begin{mylemma}
Let  $E_{f_t}^{batch}(t)$ be the event as in Definition \ref{def:two-events}, and $\lbrace \cH_t\rbrace_{t\ge 0}$ be the filtration as in Definition \ref{def:filtration}. Then, for all possible filtrations $\cH_{t-1}$, $\prob{E_{f_t}^{batch}(t)\given \cH_{t-1}} \ge 1-1/t^2$.
\label{lem:sampled-concentration}
\end{mylemma}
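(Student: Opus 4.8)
The plan is to exploit the fact that, once we condition on the filtration $\cH_{t-1}$, every ingredient of the event $E_{f_t}^{batch}(t)$ becomes deterministic. Indeed, the posterior mean $\mu_{\cS(t)}$ is a function of the observations $y_1,\ldots,y_{\cS(t)}$, the posterior standard deviation $\sigma_{t-1}$ (equivalently $k_{t-1}$) depends only on the query locations $x_1,\ldots,x_{t-1}$, the scale $v_t$ depends only on $\gamma_{\cS(t)}$, and both $w_t$ and the discretization $D_t$ depend only on $t$; all of these are measurable with respect to $\cH_{t-1}$. The only remaining source of randomness is the sampled function $f_t \sim GP_{D_t}(\mu_{\cS(t)}, v_t^2 k_{t-1})$. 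Hence, for each fixed $x \in D_t$, the marginal of $f_t(x)$ is Gaussian with mean $\mu_{\cS(t)}(x)$ and variance $v_t^2 k_{t-1}(x,x) = v_t^2\sigma_{t-1}^2(x)$, so the normalized quantity $Z_x \bydef (f_t(x)-\mu_{\cS(t)}(x))/(v_t\sigma_{t-1}(x))$ is, conditionally on $\cH_{t-1}$, a standard normal (in the degenerate case $\sigma_{t-1}(x)=0$ we have $f_t(x)=\mu_{\cS(t)}(x)$ and the defining inequality holds trivially).

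Next I would control each coordinate by a Gaussian tail bound and then take a union bound over the \emph{finite} decision set $D_t$. Using the standard two-sided estimate $\prob{|Z_x| > w_t} \le e^{-w_t^2/2}$ (valid without an extra factor of two, since $\mathrm{erfc}(a/\sqrt 2)\le e^{-a^2/2}$), the complementary event satisfies
\beqn
\prob{\overline{E_{f_t}^{batch}(t)} \given \cH_{t-1}} \le \sum_{x \in D_t} \prob{|Z_x| > w_t \given \cH_{t-1}} \le |D_t|\, e^{-w_t^2/2}.
\eeqn
The decisive point is that $w_t$ has been calibrated precisely so that this product cancels. From the construction of the discretization (each of the $d$ coordinates carrying $BLrdt^2$ uniformly spaced points) we have $|D_t| = (BLrdt^2)^d$, while the definition $w_t^2 = 4\ln t + 2d\ln(BLrdt^2)$ gives $e^{-w_t^2/2} = t^{-2}(BLrdt^2)^{-d}$. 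Multiplying, the factors $(BLrdt^2)^d$ cancel and leave exactly $t^{-2}$, whence $\prob{E_{f_t}^{batch}(t)\given \cH_{t-1}} \ge 1 - 1/t^2$.

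The work here is conceptual rather than computational: one must argue carefully that conditioning on $\cH_{t-1}$ renders $\mu_{\cS(t)}$, $\sigma_{t-1}$, $v_t$ and $D_t$ non-random, so that $f_t(x)$ is genuinely a univariate Gaussian with the claimed parameters and the union bound runs over a fixed finite set. I do not anticipate a serious obstacle; the only step demanding attention is matching the exponent $w_t^2/2$ to the cardinality $|D_t|$, which is exactly where the otherwise-mysterious constants $4$, $2d$ and $BLrdt^2$ in the definition of $w_t$ earn their keep. (Note that the Gaussian anti-concentration inequality of Lemma \ref{lem:anti-concentration} plays no role here; it supports the complementary optimism argument, whereas this lemma needs only the Gaussian upper tail.)
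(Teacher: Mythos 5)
Your proposal is correct and follows essentially the same route as the paper's own proof: conditional on $\cH_{t-1}$ the sample $f_t(x)$ is Gaussian with mean $\mu_{\cS(t)}(x)$ and variance $v_t^2\sigma_{t-1}^2(x)$, the tail bound $\prob{\abs{Z}\ge c}\le e^{-c^2/2}$ is applied pointwise, and a union bound over $\abs{D_t}=(BLrdt^2)^d$ points with the calibrated $w_t$ yields exactly $1/t^2$ (the paper phrases this by first proving the bound for generic $\delta$ and then substituting $\delta=1/t^2$). Your additional remarks on $\cH_{t-1}$-measurability and the degenerate case $\sigma_{t-1}(x)=0$ are fine refinements but do not change the argument.
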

\begin{proof}
Fix any $t \ge 1$. Now at every round $t$, GP-BTS samples $f_t$ from $GP_{D_t}\left(\mu_{\cS(t)},v_t^2k_{t-1}\right)$, i.e. $f_t(x)\given \cH_{t-1} \sim \cN \left(\mu_{\cS(t)}(x),v_t^2\sigma^2_{t-1}(x)\right)$ for all $x \in D_t$. If $a \sim \cN(0,1)$, $c \ge 0$, then $\prob{\abs{a}\ge c} \le \exp(-c^2/2)$. Using this Gaussian concentration inequality, for any $0 < \delta \le 1$,
\beqn
\prob{\abs{f_t(x)-\mu_{t-1}(x)} \le \sqrt{2\ln(1/\delta)}\;v_t\sigma_{t-1}(x) \given \cH_{t-1}} \ge 1-\delta.
\eeqn
Now applying union bound over all $x \in D_t$, we have
\beqn
\prob{\forall x \in D_t, \abs{f_t(x)-\mu_{t-1}(x)} \le v_t\sqrt{2\ln(\abs{D_t}/\delta)}\;\sigma_{t-1}(x) \given \cH_{t-1}} \ge 1-\delta.
\eeqn
Here, $\abs{D_t}$ denotes the size of the discretization set $D_t$.
Now the result follows by using $\delta = 1/t^2$ and $\abs{D_t}=(BLrdt^2)^d$.
\end{proof}
The next lemma bound the probability of the bad event, i.e. the event that value of the sampled function is greater than the original function.
\begin{mylemma}
\label{lem:compare-sample-with-original}
Let $\cH_{t-1}$ be any filtration such that $E_f^{batch}(t)$ is true, and let $p = \dfrac{1}{4e\sqrt{\pi}}$. Then for all $x \in D$,
\beqn
\prob{f_t(x)>f(x)\given \cH_{t-1}} \ge p.
\eeqn
\end{mylemma}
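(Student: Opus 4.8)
The plan is to reduce the claim to the Gaussian anti-concentration bound of Lemma \ref{lem:anti-concentration}. Fix any $x \in D$ and condition on the given filtration $\cH_{t-1}$. By construction of GP-BTS, $f_t(x) \given \cH_{t-1} \sim \cN\big(\mu_{\cS(t)}(x), v_t^2 \sigma_{t-1}^2(x)\big)$, so that the normalized quantity $Z \bydef \big(f_t(x)-\mu_{\cS(t)}(x)\big)/\big(v_t \sigma_{t-1}(x)\big)$ is a standard normal random variable given $\cH_{t-1}$ (assuming $\sigma_{t-1}(x)>0$; the degenerate case is noted below). Crucially, once we condition on $\cH_{t-1}$ the value $f(x)$ is a fixed (non-random) number, since $f$ is deterministic in the agnostic setting.

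First I would rewrite the target probability in standardized form, $\prob{f_t(x) > f(x) \given \cH_{t-1}} = \prob{Z > \theta_x \given \cH_{t-1}}$, where $\theta_x \bydef \big(f(x)-\mu_{\cS(t)}(x)\big)/\big(v_t \sigma_{t-1}(x)\big)$. Next I would invoke the hypothesis that $\cH_{t-1}$ makes $E_f^{batch}(t)$ true: by Definition \ref{def:two-events} this means $\abs{f(x)-\mu_{\cS(t)}(x)} \le v_t \sigma_{t-1}(x)$, and hence $\theta_x \le 1$. Since the survival function of a standard normal is monotonically decreasing, this yields $\prob{Z > \theta_x \given \cH_{t-1}} \ge \prob{Z > 1 \given \cH_{t-1}}$. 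Finally, applying Lemma \ref{lem:anti-concentration} with $\theta = 1$ gives $\prob{Z > 1} \ge e^{-1}/(4\sqrt{\pi}) = 1/(4e\sqrt{\pi}) = p$, which is exactly the desired bound.

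This argument is essentially routine, so I do not anticipate a serious obstacle; the only points needing care are the direction of the monotonicity step and checking that the anti-concentration lemma is invoked at the threshold $\theta = 1$ dictated by the width of the confidence interval $E_f^{batch}(t)$. The one genuine subtlety is the degenerate case $\sigma_{t-1}(x)=0$: there $f_t(x)=\mu_{\cS(t)}(x)$ deterministically and $E_f^{batch}(t)$ forces $f(x)=\mu_{\cS(t)}(x)$, so the strict inequality can fail; this is handled by assuming $\sigma_{t-1}(x)>0$ for all $x$ (which holds whenever the kernel is strictly positive definite, as is implicit when $D_t$ discretizes a continuum).
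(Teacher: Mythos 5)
Your proof is correct and takes essentially the same route as the paper's: standardize $f_t(x)$ given $\cH_{t-1}$, use the event $E_f^{batch}(t)$ to bound the normalized gap $\theta$ by $1$, and invoke the Gaussian anti-concentration bound of Lemma \ref{lem:anti-concentration}. The only (minor) difference is ordering --- you apply monotonicity of the Gaussian survival function first and invoke the lemma only at $\theta=1$, whereas the paper invokes the lemma at $\theta_t\le 1$ and then bounds $e^{-\theta_t^2}/(4\sqrt{\pi}\theta_t)\ge e^{-1}/(4\sqrt{\pi})$; your ordering is in fact slightly safer, since the lemma's stated bound is only meaningful for thresholds that are not too small, and your explicit remark on the degenerate case $\sigma_{t-1}(x)=0$ flags a corner case the paper silently ignores.
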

\begin{proof}
Fix any $x\in D$. Now, since $\cH_{t-1}$ be any filtration such that the event $E_f^{batch}(t)$ is true, we have $\abs{f(x)-\mu_{\cS(t)(x)}} \le v_t\sigma_{t-1}(x)$. Further see that $f_t(x)\given \cH_{t-1} \sim \cN \left(\mu_{\cS(t)}(x),v_t^2\sigma^2_{t-1}(x)\right)$, and hence $\dfrac{f_t(x)-\mu_{\cS(t)}(x)}{v_t \sigma_{t-1}(x)}\given \cH_{t-1} \sim \cN(0,1)$. Then from Lemma \ref{lem:anti-concentration}, we have
\beqan
\prob{f_t(x)>f(x)\given \cH_{t-1}} &=& \prob{\dfrac{f_t(x)-\mu_{\cS(t)}(x)}{v_t \sigma_{t-1}(x)}>\dfrac{f(x)-\mu_{\cS(t)}(x)}{v_t \sigma_{t-1}(x)}\given \cH_{t-1}}\\
&\ge&\prob{\dfrac{f_t(x)-\mu_{\cS(t)}(x)}{v_t \sigma_{t-1}(x)}>\dfrac{\abs{f(x)-\mu_{\cS(t)}(x)}}{v_t \sigma_{t-1}(x)}\given \cH_{t-1}}\\
&\ge&\dfrac{1}{4\sqrt{\pi}\theta_t}e^{-\theta_t^2},
\eeqan
where $\theta_t = \frac{\abs{f(x)-\mu_{\cS(t)}(x)}}{v_t \sigma_{t-1}(x)} \le 1$. This implies that $\prob{f_t(x)>f(x)\given \cH_{t-1}} \ge \dfrac{1}{4e\sqrt{\pi}} =p$.
\end{proof}
\subsection{Proof of Theorem \ref{thm:regret-bound-TS-approx}}
Fix any $0 < \delta \le 1$. Then, following the techniques of \citet[Lemma 9,10,12 and 13]{chowdhury2017kernelized}, we can show that, with probability at least $1-\delta$,
\beq
\label{eqn:cum-regret-ts}
R_T=\dfrac{11c_T}{p}\sum_{t=1}^{T}\sigma_{t-1}(x_t)  +\dfrac{(2B+1)\pi^2}{6}+\dfrac{(4B+11)c_T}{p}\sqrt{2T\ln(2/\delta)}.
\eeq
Now from Lemma \ref{lem:sum-of-sd},  $\sum\limits_{t=1}^{T}\sigma_{t-1}(x_t) \le \sqrt{(2\lambda+1)T\gamma_T}$. Also $v_T \le \sqrt{\xi_M}\Big(B + \dfrac{R}{\sqrt{\lambda}}\sqrt{2\left(\gamma_T+\ln(2/\delta)\right)}\Big)$, since $\cS(T) \le T-1 < T$. Hence, from Definition \ref{def:two-events},
\beqn
C_T \le  \sqrt{\xi_M}\Big(B + \dfrac{R}{\sqrt{\lambda}}\sqrt{2\left(\gamma_T+\ln(2/\delta)\right)}\Big) \left(1+ \sqrt{4\ln T + 2d\ln(BLrdT^2)}\right),
\eeqn
and thus $C_T=O\Big(\sqrt{\xi_M \big(\gamma_T+\ln(2/\delta)\big)d\ln(BdT)}\Big)$.
Therefore, with probability at least $1-\delta$,
\beqan
R_T=O\Bigg(\sqrt{\xi_M \big(\gamma_T+\ln(2/\delta)\big)d\ln(BdT)} \left(\sqrt{T\gamma_T}+B\sqrt{T\ln(2/\delta)}\right)\Bigg).
\eeqan
Thus $R_T = O\left(\sqrt{\xi_M T\gamma_T^2 d\ln (BdT)} + B\sqrt{\xi_MT\gamma_Td\ln (BdT)}\right)= O\Big(\sqrt{\xi_M Td\ln (BdT)}\left(B\sqrt{\gamma_T}+\gamma_T\right)\Big)$ with high probability.

\section{Description of the Initialization Scheme}
\label{appendix:init}

First an initial batch of size $T^{init}$ is selected by greedily choosing points based on the posterior variances, i.e. by choosing $x_t=\argmax_{x\in D}\sigma_{t-1}(x)$ for all $t=1,\ldots,T^{init}$. Then, rewards are sampled for this set of initial points and the posterior GP is updated, which is used as the prior GP in our algorithms. \citet{desautels2014parallelizing} show that by choosing $T^{init}$ appropriately for different kernels\footnote{We refer the reader to \citet[Table 1]{desautels2014parallelizing} for values of $C_k$ and
$T^{init}$ for common kernels.}, $\xi_M^{1/2}$ can be upper bounded by a constant $C_k$ independent of $M$. IGP-BUCB and GP-BTS, if initialized with this scheme, attain the following regret bound.
\begin{mycorollary}[Informal regret bounds for IGP-BUCB and GP-BTS using initialization]
\label{cor:initialization}
Using the initialization scheme described above, the regret bound for IGP-BUCB is $O\Big(C_k\sqrt{ T}\big(B\sqrt{\gamma_T}+\gamma_T\big)\Big)+2BT^{init}$ with high probability and for GP-BTS is $O\Big(C_k\sqrt{Td\ln (BdT)}\left(B\sqrt{\gamma_T}\Big)+\gamma_T\right)+2BT^{init}$ with high probability. 
\end{mycorollary}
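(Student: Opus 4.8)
The plan is to decompose the cumulative regret into two phases: the \emph{initialization phase}, comprising the first $T^{init}$ rounds in which points are chosen greedily to maximize posterior variance reduction, and the \emph{main phase} in which IGP-BUCB (resp.\ GP-BTS) is run with the initialized posterior GP playing the role of the prior. First I would bound the initialization regret trivially. Since $k(x,x) \le 1$ and $\norm{f}_k \le B$, the reproducing property and Cauchy-Schwarz give $\abs{f(x)} = \abs{\inner{f}{k(x,\cdot)}_k} \le \norm{f}_k \sqrt{k(x,x)} \le B$ for every $x \in D$. Hence each instantaneous regret during initialization satisfies $f(x^\star) - f(x_t) \le \abs{f(x^\star)} + \abs{f(x_t)} \le 2B$, so the initialization phase contributes at most $2BT^{init}$ to the total regret, accounting for the additive term in the corollary.

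Next, for the main phase, the crucial ingredient is the result of \citet{desautels2014parallelizing} that, by choosing $T^{init}$ appropriately for the kernel at hand, the maximum hallucinated information satisfies $\xi_M^{1/2} \le C_k$ for a constant $C_k$ \emph{independent of the batch size} $M$. The mechanism is that greedily shrinking the posterior variances over the $T^{init}$ initial rounds uniformly bounds the conditional mutual information $I\big(f(x);Y_{\cS(t)+1:t-1}\given Y_{1:\cS(t)}\big)$ for all subsequent $t$; by Lemma \ref{lem:conditional-mi} this is exactly the quantity controlling the ratio $\sigma_{\cS(t)}(x)/\sigma_{t-1}(x)$, so the bound $\sigma_{\cS(t)}(x)/\sigma_{t-1}(x) \le \xi_M^{1/2} \le C_k$ holds uniformly.

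Then I would reapply the regret analyses of Theorems \ref{thm:regret-bound-UCB-approx} and \ref{thm:regret-bound-TS-approx} essentially verbatim to the main phase. Those proofs depend on $\xi_M$ only through the factor $\sqrt{\xi_M}$ that enters $\beta_t$ and $v_t$, and through the batch concentration bound of Lemma \ref{lem:batch-concentration}; both remain valid when the starting distribution is the initialized posterior, since the concentration statement and the information-gain inequalities of Lemma \ref{lem:sum-of-sd} hold for any starting posterior GP. Substituting $\sqrt{\xi_M} \le C_k$ into the bound $R_T = O\big(\sqrt{\xi_M T}(B\sqrt{\gamma_T}+\gamma_T)\big)$ of Theorem \ref{thm:regret-bound-UCB-approx} yields $O\big(C_k\sqrt{T}(B\sqrt{\gamma_T}+\gamma_T)\big)$ for IGP-BUCB, and similarly $O\big(C_k\sqrt{Td\ln(BdT)}(B\sqrt{\gamma_T}+\gamma_T)\big)$ for GP-BTS via Theorem \ref{thm:regret-bound-TS-approx}. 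Adding the $2BT^{init}$ initialization cost gives the stated bounds.

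The main obstacle is justifying the constant bound $\xi_M^{1/2} \le C_k$: this is the heart of the initialization argument and is precisely the content imported from \citet{desautels2014parallelizing}, requiring a careful accounting of how greedy variance reduction over $T^{init}$ rounds controls the hallucinated information at \emph{all} later rounds, simultaneously with the high-probability concentration event of Lemma \ref{lem:batch-concentration}. Once that bound is in hand, the phase decomposition and the reapplication of Theorems \ref{thm:regret-bound-UCB-approx} and \ref{thm:regret-bound-TS-approx} are routine.
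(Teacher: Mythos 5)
Your proposal is correct and follows essentially the same route as the paper's proof: split the regret over the first $T^{init}$ rounds (bounded trivially by $2BT^{init}$ since $\abs{f(x)} \le B$) and the remaining $T-T^{init}$ rounds, where the bound $\xi_M^{1/2} \le C_k$ imported from \citet{desautels2014parallelizing} is substituted into the regret bounds of Theorems \ref{thm:regret-bound-UCB-approx} and \ref{thm:regret-bound-TS-approx}, using monotonicity of $\gamma_T$ to replace $\gamma_{T-T^{init}}$ by $\gamma_T$. Your additional justifications (the reproducing-property argument for $\abs{f(x)} \le B$ and the remark that the lemmas remain valid with the initialized posterior as prior) are sound refinements of the same argument.
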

\begin{proof}
The proof is similar to \citet[Theorem 5]{desautels2014parallelizing}. The sum of the regrets
over $T$ rounds is split over the first $T^{init}$ rounds and the remaining $T-T^{init}$ rounds. Cumulative regret over the first $T^{init}$ rounds is upper bounded by $2BT^{init}$, since by our hypothesis $f(x) \le B$ for all $x \in D$. Cumulative regret for IGP-BUCB over the next $T-T^{init}$ rounds is $O\Big(C_k\big(B\sqrt{\gamma_{T-T^{init}}}+\gamma_{T-T^{init}}\big)\sqrt{T-T^{init}}\Big)=O\Big(C_k\sqrt{ T}\big(B\sqrt{\gamma_T}+\gamma_T\big)\Big)$, since $\gamma_T$ is a non decreasing function. Combining the two terms gives us the desired bound for IGP-BUCB. The bound for GP-BTS follows from a similar argument.
\end{proof}
Despite the dependence on the initialization scheme and the constant term $C_k$, Corollary \ref{cor:initialization} is encouraging as $T^{init}$ is roughly of the order of $\gamma_M$, which again is sublinear in $M$. Thus, as long as the batch size $M$
does not grow too quickly, the second terms in the regret bounds are dominated by the first terms. Hence the regret bounds
of IGP-BUCB and GP-BTS are almost as good as (upto a constant factor) those of their strictly sequential versions, i.e. IGP-UCB and GP-TS respectively \cite{pmlr-v70-chowdhury17a}.

\end{appendix}


\end{document}